\newcommand{\citep}[1]{\cite{#1}}
\newcommand{\namecite}[1]{\citeauthor{#1}~(\citeyear{#1})}
\newcommand{\lemref}[1]{Lemma~\ref{#1}}
\newcommand{\thmref}[1]{Theorem~\ref{#1}}
\newcommand{\propref}[1]{Proposition~\ref{#1}}
\newcommand{\appref}[1]{Appendix~\ref{#1}}
\newcommand{\eqnref}[1]{Equation~\ref{#1}}
\newcommand{\tblref}[1]{Table~\ref{#1}}
\newcommand{\algref}[1]{Algorithm~\ref{#1}}
\newcommand{\Sset}{\mathcal{S}}
\newcommand{\Aset}{\mathcal{A}}
\newcommand{\Eset}{\mathcal{E}}
\newcommand{\Mset}{\mathcal{M}}
\newcommand{\Nset}{\mathbb{N}}
\newcommand{\Aalg}{\mathbf{A}}
\newcommand{\Event}{\mathcal{E}}
\newcommand{\actProbe}{\mathsf{P}}
\newcommand{\actSkip}{\mathsf{S}}
\newcommand{\algfont}[1]{\textsc{#1}}
\newcommand{\cf}{\textit{c.f.}}
\newcommand{\eg}{\textit{e.g.}}
\newcommand{\ie}{\textit{i.e.}}
\newcommand{\iid}{i.i.d.}
\newcommand{\no}{\bot}
\newcommand{\naive}{na\"ive}
\newcommand{\vmax}{V_\mathrm{max}}
\newcommand{\hmtl}{\algfont{HMTL}}
\newcommand{\rmax}{\algfont{Rmax}}
\newcommand{\eee}{\algfont{E$^3$}}
\newcommand{\fe}{\algfont{ForcedExp}}
\newcommand{\ef}{\algfont{ExpFirst}}
\newcommand{\pacexp}{\algfont{PAC-Explore}}
\newcommand{\fmllrl}{\algfont{Finite-Model-RL}}
\newcommand{\E}{\mathbb{E}}
\renewcommand{\P}[1]{\mathbb{P}\left\{#1\right\}}
\newcommand{\norm}[1]{\left\|#1\right\|}
\newcommand{\setcard}[1]{\left|#1\right|}
\newcommand{\defeq}{\vcentcolon =}
\newcommand{\1}[1]{\mathbb{I}\left\{#1\right\}}
\newtheorem{theorem}{Theorem}
\newtheorem{lemma}[theorem]{Lemma}
\newtheorem{proposition}[theorem]{Proposition}
\theoremstyle{definition}
\newcommand{\mathcomment}[1]{\quad\text{(#1)}}
\definecolor{grey}{gray}{0.4}
\newcommand{\oldthm}[1]{\vspace{1mm}\noindent\textbf{Theorem~\ref{#1}.}}
\newcommand{\oldlem}[1]{\vspace{1mm}\noindent\textbf{Lemma~\ref{#1}.}}
\newcommand{\oldprop}[1]{\vspace{1mm}\noindent\textbf{Proposition~\ref{#1}.}}
\newcommand{\thmskip}{\vspace{2mm}}
\newcommand{\lihong}[1]{[[\textbf{LL:} \textit{#1}]]}
\newcommand{\emma}[1]{[[\textbf{EB:} \textit{#1}]]}
\renewcommand{\lihong}[1]{}
\renewcommand{\emma}[1]{}
\begin{document}
%
\newcommand{\textTitle}{The Online Coupon-Collector Problem and \\ Its Application to Lifelong Reinforcement Learning}
\title{\textTitle}
\author{Emma Brunskill\\
Department of Computer Science\\
Carnegie Mellon University\\
Pittsburgh, PA 15213 \\
\texttt{ebrunskill@cs.cmu.edu} \\
\And
Lihong Li \\
Microsoft Research \\
One Microsoft Way \\
Redmond, WA 98052 \\
\texttt{lihongli@microsoft.com}
}
\maketitle
\begin{abstract}
\begin{quote}
Transferring knowledge across a sequence of related tasks is an important challenge in reinforcement learning (RL).  Despite much encouraging empirical evidence, there has been little theoretical analysis.  In this paper, we study a class of lifelong RL problems: the agent solves a sequence of tasks modeled as finite Markov decision processes (MDPs), each of which is from a finite set of MDPs with the same state/action sets and different transition/reward functions.  Motivated by the need for cross-task exploration in lifelong learning, we formulate a novel \emph{online coupon-collector problem} and give an optimal algorithm.  This allows us to develop a new lifelong RL algorithm, whose overall sample complexity in a sequence of tasks is much smaller than single-task learning, even if the sequence of tasks is generated by an adversary.  Benefits of the algorithm are demonstrated in simulated problems, including a recently introduced human-robot interaction problem.
\end{quote}
\end{abstract}

\section{Introduction}
\label{sec:intro}
Transfer learning, the ability to take prior knowledge and use it to perform well on a new task, is an essential capability of intelligence. Tasks themselves often involve multiple steps of decision making under uncertainty.  Therefore, lifelong learning across multiple reinforcement-learning (RL)~\citep{Sutton98Reinforcement} tasks, or LLRL, is of significant interest. Potential applications are broad, from leveraging information across customers, to speeding robotic manipulation in new environments.  In the last decades, there has been much previous work on this problem, which predominantly focuses on providing promising empirical results but with little formal performance guarantees (\eg, \namecite{Ring97Child}, \namecite{wilson2007}, \namecite{Taylor09Transfer}, \namecite{Schmidhuber13Powerplay} and the many references therein), or in the offline/batch setting~\citep{Lazaric11Transfer}, or for multi-armed bandits~\citep{Azar13Sequential}.

In this paper, we focus on a special case of lifelong reinforcement learning which captures a class of interesting and challenging applications.  We assume that all tasks, modeled as finite Markov decision processes or MDPs, have the same state and action spaces, but may differ in their transition probabilities and reward functions.  Furthermore, the tasks are elements of a finite collection of MDPs that are initially unknown.\footnote{Given \emph{finite} sets of states and action, MDPs with similar transition/reward parameters have similar value functions.  Thus, finitely many policies suffice to represent near-optimal policies.} Such a setting is particularly motivated by applications to user personalization, in domains like education, health care and online marketing, where one can consider each ``task'' as interacting with one particular individual, and the goal is to leverage prior experience to improve performance with later users. Indeed, partitioning users into several groups with similar behavior has found uses in various application domains
\cite{Chu09Personalized,fern2014decision,liuEDM2015,nikolaidis2015efficient}: it offers a form of partial personalization, allowing the system to more quickly learn good interactions with the user (than learning for each user separately) but still offering much more personalization than modeling all individuals as the same.

A critical issue in transfer or lifelong learning is how and when to leverage information from previous tasks in solving the current one. If the new task represents a different MDP with a different optimal policy, then leveraging prior task information may actually result in substantially worse performance than learning with no prior information, a phenomenon known as \textit{negative transfer}~\cite{Taylor09Transfer}. Intuitively, this is partly because leveraging prior experience 
can prevent an agent from visiting states with different rewards in the new task, and yet would be visited under the optimal policy of the new task.  
In other words, 
 in lifelong RL, in addition to exploration typically needed to obtain optimal policies in single-task RL (\ie, single task exploration), 
the agent also needs sufficient exploration to 
uncover \emph{relations among tasks} (\ie, task-level transfer).

To this end, the agent faces an online discovery problem: the new task may be the same as one of prior tasks, or may be a novel one. The agent can treat it as a task that has been seen before (therefore transferring prior knowledge to solve it), or try to discover whether it is novel.  Failing to correctly treat a novel task as new, or treating an existing task as the same as a prior task, will lead to sub-optimal performance.


The main contributions are three-fold.  First, inspired by the need for online discovery in LLRL, we formulate and study a novel online coupon-collector problem (OCCP), providing algorithms with optimal regret guarantees.  These results are of independent interest, given the wide application of the classic coupon-collector problem.  Second, we propose a novel LLRL algorithm, which essentially is an OCCP algorithm that uses sample-efficient single-task RL algorithms as a black box.  When solving a sequence of tasks, compared to single-task RL, this LLRL algorithm is shown to have a substantially lower sample complexity of exploration, a theoretical measure of learning speed in online RL.  Finally, we provide simulation results on a simple gridworld simulation, 
and a simulated human-robot collaboration task recently introduced by \namecite{nikolaidis2015efficient}, in which there 
exist a finite set of different (latent) human user types with 
different preferences over their desired robot collaboration 
interaction. Our results illustrate the benefits and 
relative advantage of our new approach over prior ones.


\subsubsection{Related Work.}

There has been substantial interest in lifelong learning across sequential decision making tasks for decades; \eg, \namecite{Ring97Child}, \namecite{Schmidhuber13Powerplay}, and \namecite{White12Scaling}.  
Lifelong RL is closely related to transfer RL, in which information (or data) from source MDPs is used to accelerate learning in the target MDP~\cite{Taylor09Transfer}.  A distinctive element in lifelong RL is that every task is both a target and a source task.  Consequently, the agent has to explore the current task once in a while to allow better knowledge to be transferred to better solve future tasks---this is the motivation for the online coupon-collector problem we formulate and study here.

Our setting, of solving MDPs sampled from a finite set, is 
related to  
\namecite{konidaris2014hidden}'s  hidden parameter MDPs, which cover our setting 
and others where there is a latent variable that captures key aspects of a task.   \namecite{wilson2007} tackle a similar problem with a hierarchical Bayesian approach to modeling task-generation processes.
Most prior work on lifelong/transfer RL has focused on algorithmic and empirical innovations, with little theoretical analysis for \emph{online} RL.  An exception is a two-phase algorithm~\cite{Brunskill13Sample}, which has provably lower sample complexity than single-task RL, but makes a few critical assumptions.
Our setting is more general: tasks may be selected adversarially, instead of stochastically~\cite{wilson2007,Brunskill13Sample}.  
Consequently, we do not assume 
a minimum task sampling probability, or knowledge of 
the cardinality of the (latent) set of MDPs.  
This allows our algorithm to be applied in more realistic problems such as personalization domains where the number of user ``types'' is typically unknown in advance.
In addition, ~\namecite{Bouammar15Safe} recently introduced and provided regret 
bounds (as a function of the number of tasks) of a policy-search algorithm 
for LLRL.  Each task's policy parameter is represented as a linear combination of shared latent variables, allowing it to be used in continuous domains.
However, 
in addition to local optimality guarantees typical in policy-search methods, lack of sufficient exploration in their approach may also lead to suboptimal policies.

In addition to the original coupon-collector problem,
to be described in the next section, our online coupon-collector problem is related to 
bandit problems~\citep{Bubeck12Regret} that also require efficient exploration. 
In bandits every action leads to an observed loss, while in OCCP only one action has observable loss.  Apple tasting~\citep{Helmbold00Apple} has a similar flavor as OCCP, but with a different structure in the loss matrix; furthermore, its analysis is in the mistake-bound model that is not suitable here. 
\namecite{Langford02Competitive} study an abstract model for exploration, 
but their setting assumes a non-decreasing, deterministic reward sequence, while we allow non-monotonic and stochastic (or even adversarial) reward sequences. 
Consequently, an explore-first strategy is optimal in their setting but not in OCCP.  
Furthermore, they analyze competitive ratios, while we focus on excessive loss.  
\namecite{Bubeck14Optimal} tackle a very different problem called ``optimal discovery'', for quick identification of hidden elements assuming access to different sampling distributions.  Finally, compared to the missing mass problem~\cite{McAllester00Convergence}, which is about pure predictions, OCCP involves decision making, thus requires balancing exploration and exploitation.

\section{The Online Coupon-Collector Problem}
\label{sec:odp}

Motivated by the need for cross-task exploration to discover novel MDPs in LLRL, we formulate and study a novel problem that is an online version of the classic Coupon-Collector Problem, or CCP~\cite{Vonschelling54Coupon}.  Solutions to online CCP play a crucial role in developing a new lifelong RL algorithm in the next section.  Moreover, the problem may be of independent interest in many disciplines 
like optimization, biology, communications, and cache management in operating systems, where CCP has found important applications~\cite{Boneh97Coupon,Berenbrink09Weighted}, as well as in other meta-learning problems that require efficient exploration to uncover cross-task relation.

\subsection{Formulation}

In the Coupon-Collector Problem, there is a multinomial distribution $\mu$ over a set $\Mset$ of $C$ coupon types.  In each round, one type is sampled from $\mu$.  
Much research has been done to study probabilistic properties of the 
(random) time when all $C$ coupons are first collected, especially its expectation (\eg, \namecite{Berenbrink09Weighted} and references therein).

In our \emph{Online Coupon-Collector Problem} or OCCP, $C=\setcard{\Mset}$ is unknown.  Given a coupon, the learner may probe the type or skip; thus, $\Aset=\{\actProbe~\text{(``probe'')}, \actSkip~\text{(``skip'')}\}$ is the binary action set.  
The learner is also given four constants, $\rho_0 < \rho_1 \le \rho_2 < \rho_3$, specifying the loss matrix $L$ in \tblref{tbl:loss}.

\begin{table}[h]
  \centering
  \caption{OCCP loss matrix: rows indicate actions; columns indicate whether the current item is novel or not.  The known constants, 
$\rho_0 < \rho_1 \le \rho_2 < \rho_3$,
  specify costs of actions in different situations.} \label{tbl:loss}
\begin{tabular}{l|cc}
 & $\1{M_t\in\Mset_t}$ & $\1{M_t\notin\Mset_t}$ \\
\hline
$\actSkip$ & $\rho_0$ & $\rho_3$ \\
$\actProbe$ & $\rho_1$ & $\rho_2$
\end{tabular}
\end{table} 

The game proceeds as follows.  Initially, the set of discovered items $\Mset_1$ is $\emptyset$.  For round $t=1,2,\ldots,T$:
\begin{compactitem}
\item{Environment selects a coupon $M_t\in\Mset$ of unknown type.}
\item{The learner chooses action $A_t\in\Aset$, and suffers loss $L_t$ as specified in the loss matrix of \tblref{tbl:loss}.  The learner observes $L_t$ if $A_t=\actProbe$, and $\no$ (``no observation'') otherwise.}
\item{If $A_t=\actProbe$, $\Mset_{t+1} \leftarrow \Mset_t \cup \{M_t\}$; else $\Mset_{t+1}\leftarrow\Mset_t$.}
\end{compactitem}

At the beginning of round $t$, define the \emph{history} up to $t$ as $H_t \defeq (\Mset_1,A_1,L_1,\Mset_2,A_2,L_2,\ldots,\Mset_{t-1},A_{t-1},L_{t-1})$.  An algorithm is \emph{admissible}, if it chooses actions $A_t$ based on $H_t$ and possibly an external source of randomness.
We distinguish two settings.  In the \emph{stochastic} setting, environment samples $M_t$ from an unknown distribution $\mu$ over $\Mset$ in an \iid\ (independent and identically distributed) fashion.  In the \emph{adversarial} setting, the sequence $(M_t)_t$ can be generated by an adversarial in an arbitrary way that depends on $H_t$.

If the learner \emph{knew} the type of $M_t$, the optimal strategy would be to choose $A_t=\actProbe$ if $M_t\notin\Mset_t$, and $A_t=\actSkip$ otherwise.  The loss is $\rho_2$ if $M_t$ is a new type, and $\rho_0$ otherwise.  Hence, after $T$ rounds, if $C^*\le C$ is the number of distinct items in the sequence $(M_t)_t$, this ideal strategy has the loss:
\begin{equation}
L^*(T) \defeq \rho_2C^*+\rho_0(T-C^*)\,. \label{eqn:optimal-loss}
\end{equation}
The challenge, of course, is that the learner does not know $M_t$'s type before choosing $A_t$.  She thus has to balance exploration (taking $A_t=\actProbe$ to see if $M_t$ is novel) and exploitation (taking $A_t=\actSkip$ to yield small loss $\rho_0$ if it is likely that $M_t\in\Mset_t$).  Clearly, over- and under-exploration result in suboptimal strategies.  We are therefore interested in finding algorithms $\Aalg$ to have smallest cumulative loss as possible.

Formally, an OCCP algorithm $\Aalg$ is a possibly stochastic function that maps histories to actions: $A_t=\Aalg(H_t)$.  The total $T$-round loss suffered by $\Aalg$ is $L(\Aalg,T) \defeq \sum_{t=1}^T L_t$.  The \emph{$T$-round regret} of an algorithm $\Aalg$ is $R(\Aalg,T) \defeq L(\Aalg,T) - L^*(T)$, and its expectation by $\bar{R}(\Aalg,T)\defeq\E[R(\Aalg,T)]$, where the expectation is taken with respect to any randomness in the environment as well as in $\Aalg$.


%


\subsection{Explore-First Strategy}

In the \emph{stochastic} case,
it can be shown that if an algorithm chooses $\actProbe$ for a total of $E$ times, its expected regret is smallest if these actions are chosen at the very beginning.  The resulting strategy is sometimes called \algfont{explore-first}, or \ef\ for short, in the multi-armed bandit literature.

With knowledge of $\mu_m\defeq\min_{M\in\Mset}\mu(M)$, one may set $E$ so that all types in $\Mset$ will be discovered in the first (probing) phase consisting of $E$ rounds with high probability.
This results in a high-probability regret bound, which can be used to establish an expected regret bound, as summarized below.
A proof is given in \appref{sec:ef-lemma}.
\newcommand{\textPropEf}{
For any $\delta\in(0,1)$, let $E=\mu_m^{-1}\ln\frac{1}{\mu_m\delta}$ where $\mu_m=\min_{M\in\Mset}\mu(M)$.  Then, with probability $1-\delta$, $R(\ef,T) \le 
\frac{\rho_1-\rho_0}{\mu_m}\ln\frac{1}{\mu_m\delta}$.  Moreover, 
if $E=\frac{1}{\mu_m}\ln\frac{(\rho_3-\rho_0)T}{\rho_1-\rho_0}$,
then the expected regret is $\bar{R}(\ef,T) \le \frac{\rho_1-\rho_0}{\mu_m}\left(\ln\frac{(\rho_3-\rho_0)T}{\rho_1-\rho_0}+1\right)$.
}
\begin{proposition} \label{prop:ef}
\textPropEf
\end{proposition}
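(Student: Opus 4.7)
The plan is to analyze \ef's regret round by round in its two phases, then combine a coupon-collector union bound with a careful choice of the exploration length $E$.

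First, I would classify per-round regret. In the exploration phase (rounds $t=1,\dots,E$, algorithm probes), if $M_t\in\Mset_t$ the algorithm pays $\rho_1$ while the ideal policy (which knows types) pays $\rho_0$, contributing $\rho_1-\rho_0$; if $M_t\notin\Mset_t$ both pay $\rho_2$ and the regret is zero. Hence the exploration regret is deterministically at most $E(\rho_1-\rho_0)$. In the exploitation phase (rounds $t>E$, algorithm skips), if $M_t\in\Mset_{E+1}$ both the algorithm and the ideal policy pay $\rho_0$, so the regret is zero; otherwise the per-round regret is at most $\rho_3-\rho_0$. Therefore, on the event $F=\{\Mset_{E+1}=\Mset\}$ that exploration discovered every type, the exploitation contributes no regret at all.

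Next, I would bound $\P{F^c}$ using the classical coupon-collector union bound in the \iid\ setting: for each $M$, $\P{M\notin\Mset_{E+1}}=(1-\mu(M))^E\le e^{-\mu_m E}$, so $\P{F^c}\le C e^{-\mu_m E}\le \mu_m^{-1} e^{-\mu_m E}$, where the last step uses $C=|\Mset|\le 1/\mu_m$. For the high-probability statement, I would plug in $E=\mu_m^{-1}\ln\frac{1}{\mu_m\delta}$, which makes $\P{F^c}\le\delta$. On the complementary event, only the exploration regret survives, yielding $R(\ef,T)\le E(\rho_1-\rho_0)=\frac{\rho_1-\rho_0}{\mu_m}\ln\frac{1}{\mu_m\delta}$ with probability $1-\delta$.

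For the expectation bound I would split
\[
\bar{R}(\ef,T)\le E(\rho_1-\rho_0)+\P{F^c}\cdot T(\rho_3-\rho_0),
\]
using the worst-case per-round exploitation regret $\rho_3-\rho_0$ on $F^c$. Substituting $E=\mu_m^{-1}\ln\frac{(\rho_3-\rho_0)T}{\rho_1-\rho_0}$ gives $e^{-\mu_m E}=\frac{\rho_1-\rho_0}{(\rho_3-\rho_0)T}$, so the tail term collapses to exactly $\frac{\rho_1-\rho_0}{\mu_m}$, and the two pieces sum to $\frac{\rho_1-\rho_0}{\mu_m}\bigl(\ln\frac{(\rho_3-\rho_0)T}{\rho_1-\rho_0}+1\bigr)$, matching the statement.

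There is no real obstacle here: the argument is a textbook explore-first decomposition. The only mildly delicate step is the $C\le 1/\mu_m$ bound that eliminates the unknown cardinality from the failure probability and makes the choice $E=\mu_m^{-1}\ln\frac{(\rho_3-\rho_0)T}{\rho_1-\rho_0}$ (without a $\mu_m$ inside the logarithm) suffice for both the stated high-probability and expected-regret bounds.
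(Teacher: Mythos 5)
Your proof is correct and follows essentially the same route as the paper: an explore-first regret decomposition, the coupon-collector union bound combined with $C\le 1/\mu_m$ to get failure probability $\le\delta$, and a worst-case $(\rho_3-\rho_0)T$ penalty on the failure event for the expectation. The only cosmetic difference is that the paper arrives at the second choice of $E$ by writing the expected regret as a function of $\delta$ and minimizing it via convexity, whereas you substitute the stated $E$ directly and check that the exploration and tail terms sum to $\frac{\rho_1-\rho_0}{\mu_m}\bigl(\ln\frac{(\rho_3-\rho_0)T}{\rho_1-\rho_0}+1\bigr)$ — the same calculation in a slightly more direct form.
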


\subsection{Forced-Exploration Strategy}

While \ef\ is effective in stochastic OCCP, it requires to know $\mu_m$, and the probing phase may be too long for small $\mu_m$.  Moreover, in many scenarios, the sampling process may be non-stationary (\eg, different types of users may use the Internet at different time of the day) or even adversarial (\eg, an attacker may present certain MDPs in earlier tasks in LLRL to cause an algorithm to perform poorly in future ones).
We now study a more general algorithm, \fe, based on forced exploration, and prove a regret upper bound.  The next subsection will present a matching lower bound, indicating the algorithm's optimality.

Before the game starts, the algorithm chooses a \emph{fixed} sequence of ``probing rates'': $\eta_1,\ldots,\eta_T\in[0,1]$.  In round $t$, it chooses actions accordingly: $\P{A_t=\actSkip}=1-\eta_t$ and $\P{A_t=\actProbe}=\eta_t$.  The main result in this subsection is as following, proved in \appref{sec:fe-proofs}.
\newcommand{\textThmFeMain}{%
Let $\eta_t = t^{-\alpha}$ (polynomial decaying rate) for some parameter $\alpha\in(0,1)$.  Then, for any given $\delta\in(0,1)$,
\begin{equation}
R(\fe,T) \le C^* \rho_3 \left(T^\alpha\ln\frac{C^*}{\delta}+1\right)\,, \label{eqn:fe-hp-regret}
\end{equation}
with probability $1-\delta$.  The expected regret is 
$\bar{R}(\fe,T) \le C^*\rho_3 T^{\alpha} + \frac{\rho_1}{1-\alpha} T^{1-\alpha}$.
Both bounds are $O(\sqrt{T})$ by by choosing $\alpha=1/2$.
}
\begin{theorem} \label{thm:fe-main}
\textThmFeMain
\end{theorem}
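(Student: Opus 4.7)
The plan is to classify each round by two binary attributes (whether $M_t$ is novel and whether $A_t$ is a probe), reduce the regret to a tractable linear combination, and then bound each piece by a geometric-tail argument together with a Chernoff bound. Let $a$, $b$, $c$, $d$ count the number of novel-skip, novel-probe, non-novel-skip, and non-novel-probe rounds respectively, so $a+b+c+d = T$. Provided every type that appears is eventually probed within the horizon (verified whp below), we have $b = C^*$, and a direct computation against $L^*(T) = \rho_2 C^* + \rho_0(T-C^*)$ gives
\begin{equation*}
R(\fe, T) \;=\; a(\rho_3-\rho_0) + d(\rho_1-\rho_0) \;\le\; a\rho_3 + d\rho_1.
\end{equation*}
So it suffices to control $a$ and $d$ separately.

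For $a$, I would fix a type $m\in\Mset$ and condition on its (possibly adversarially chosen) appearance rounds $s_1<s_2<\cdots$. Since the actions are independent Bernoullis with $\P{A_{s_j}=\actProbe}=\eta_{s_j}=s_j^{-\alpha}\ge T^{-\alpha}$, the probability that the first $k$ appearances of $m$ are all skipped is $\prod_{j\le k}(1-\eta_{s_j}) \le (1-T^{-\alpha})^k \le e^{-kT^{-\alpha}}$. Setting this $\le \delta/C^*$ gives $k \le T^\alpha\ln(C^*/\delta)+1$, and a union bound over the at most $C^*$ distinct types that appear in the sequence yields $a \le C^*(T^\alpha\ln(C^*/\delta)+1)$ with probability $\ge 1-\delta$. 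Summing the same tail in expectation gives $\E[a] \le C^*\sum_{k\ge 0}(1-T^{-\alpha})^k = C^* T^\alpha$.

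For $d$, observe that $d$ is dominated by $\sum_{t=1}^T\1{A_t=\actProbe}$, a sum of independent Bernoullis with total mean $\sum_{t=1}^T t^{-\alpha} \le T^{1-\alpha}/(1-\alpha)$. This directly yields the $\frac{\rho_1}{1-\alpha}T^{1-\alpha}$ contribution to the expected regret, and a standard Chernoff tail supplies the matching high-probability control. Substituting the two bounds into the decomposition above proves both inequalities in the theorem, and setting $\alpha=1/2$ balances $T^\alpha$ against $T^{1-\alpha}$ to obtain the $O(\sqrt{T})$ rate.

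The main technical subtlety lies in the $a$-bound: an adversary can arrange for a type to first appear only at late rounds where $\eta_t$ is small, so one cannot use the empirical $\eta_{s_j}$'s. The saving grace is the uniform lower bound $\eta_t\ge T^{-\alpha}$ on $[1,T]$, which renders the geometric-decay estimate insensitive to the adversary's scheduling. The auxiliary claim $b=C^*$ (every appearing type is eventually probed whp) is absorbed into the same union bound by applying the tail estimate with $k$ equal to each type's total number of appearances.
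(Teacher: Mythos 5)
Your argument is correct and rests on the same two pillars as the paper's: a per-type geometric-tail bound on the number of pre-discovery appearances, made adversary-proof by the uniform bound $\eta_t\ge T^{-\alpha}$ on $[1,T]$ (the paper reaches the same estimate slightly differently, lower-bounding $\sum_i\eta_{t_i}$ by the last-$m'$-rounds sum and an integral, which again yields $T^{-\alpha}(m'-1)$), plus a union bound over the at most $C^*$ types; and, for the expectation, the probing budget $\sum_t\eta_t\le T^{1-\alpha}/(1-\alpha)$. Where you genuinely diverge is in the bookkeeping. Your four-way decomposition $R\le a(\rho_3-\rho_0)+d(\rho_1-\rho_0)$ followed by direct tail sums ($\E[a]\le C^*T^\alpha$, $\E[d]\le\sum_t\eta_t$) replaces the paper's heavier route through \lemref{lem:m-loss} and \propref{prop:fe} (a conditional-expectation computation over the first-discovery index), and is cleaner; incidentally, your proviso that $b=C^*$ is unnecessary, since $(b-C^*)(\rho_2-\rho_0)\le0$ makes the decomposition an inequality unconditionally. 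The one substantive mismatch is the high-probability statement: your accounting necessarily carries the probing term $d\rho_1$ (order $\rho_1 T^{1-\alpha}$ after a Chernoff bound), so you prove $R\le C^*\rho_3\left(T^\alpha\ln\frac{C^*}{\delta}+1\right)+\tilde{O}\left(\rho_1 T^{1-\alpha}\right)$ rather than the literal \eqnref{eqn:fe-hp-regret}. This is not a defect of your approach: the paper's own proof drops that term by charging only $\rho_0$ on every post-discovery round (its bound $\rho_3\min\{m,T_0\}+\rho_0(m-T_0)_+$), which ignores that \fe\ keeps probing with probability $\eta_t$ and then pays $\rho_1>\rho_0$; your version is the more careful accounting, and both yield the advertised $O(\sqrt{T})$ at $\alpha=1/2$. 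One small point to tighten (the paper glosses over it too): in the adversarial setting the appearance times $s_1<s_2<\cdots$ of a type are themselves adaptive to past actions, so "condition on the appearance rounds and treat the coin flips as independent" should be phrased as a stopping-time argument; the uniform bound $\P{A_t=\actSkip}\le 1-T^{-\alpha}$ at every round is exactly what makes that argument go through.
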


The results show that \fe\ eventually performs as well as the \emph{hypothetical} optimal strategy that knows the type of $M_t$ in every round $t$, no matter how $M_t$ is generated.  Moreover, the per-round regret decays on the order of $1/\sqrt{T}$, which we will show to be optimal shortly.


\subsection{Lower Bounds}

The main result in this subsection, \thmref{thm:lb}, shows the $O(\sqrt{T})$ regret bound for \fe\ is essentially not improvable, in term of $T$-dependence, even in the stochastic case.  The idea of the proof, given in \appref{sec:proof-lb}, is to construct a hard instance of stochastic OCCP.  On one hand, $\Omega(\sqrt{T})$ regret is suffered unless all $C$ types are discovered.  On the other hand, most of the types have small probability $\mu_m$ of being sampled, requiring the learner to take the exploration action $\actProbe$ many times to discover all $C$ types.  The lower bound follows from an appropriate value of $\mu_m$.
\newcommand{\textThmLb}{
There exists an OCCP where every admissible algorithm has an expected regret of $\Omega(\sqrt{T})$, and for sufficiently small $\delta $, the regret is $\Omega(\sqrt{T})$ with probability $1-\delta$. 
}
\begin{theorem} \label{thm:lb}
\textThmLb
\end{theorem}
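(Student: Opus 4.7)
The plan is to construct, for each $T$, a stochastic OCCP instance on which every admissible algorithm suffers expected regret $\Omega(\sqrt{T})$, then upgrade the expectation bound to a constant-probability bound using elementary concentration. Take $\Mset = \{1,2\}$ with $\mu(1) = 1-p$ and $\mu(2) = p$ for $p \defeq 1/\sqrt{T}$. Because $M_t$ is drawn i.i.d.\ and $A_t$ is a (possibly randomized) function of the history $H_t$, we have $M_t \perp A_t \mid H_t$; in particular $\P{M_t = j, A_t = \actProbe} = \mu(j)\,\P{A_t = \actProbe}$ for each $j$. Write $N_P \defeq \sum_{t=1}^T \1{A_t = \actProbe}$.

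The key observation from the loss matrix is that the per-round regret is always non-negative, is at least $\rho_1 - \rho_0 > 0$ whenever the algorithm probes an item already in $\Mset_t$, and is at least $\rho_3 - \rho_2 > 0$ whenever it skips an item not yet in $\Mset_t$. I then split according to $\E[N_P]$. \emph{Case A}: if $\E[N_P] \ge \sqrt{T}/2$, then the expected number of probes landing on type $1$ is $(1-p)\E[N_P] \ge \sqrt{T}/4$ for large $T$, and since at most one such probe can be the first ``discovery'' contributing zero regret, one obtains $\E[R(\Aalg, T)] \ge (\sqrt{T}/4 - 1)(\rho_1 - \rho_0) = \Omega(\sqrt{T})$. \emph{Case B}: if $\E[N_P] < \sqrt{T}/2$, then the expected number of probes on type $2$ is $p\,\E[N_P] < 1/2$, so by Markov's inequality the algorithm ever probes type $2$ with probability at most $1/2$. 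Conditioned on this ``never probed'' event, every appearance of type $2$ is a skip of an undiscovered item contributing at least $\rho_3 - \rho_2$; since the number of type-$2$ appearances is $\mathrm{Bin}(T,p)$ with mean $\sqrt{T}$, it exceeds $\sqrt{T}/2$ with constant probability, again giving $\E[R(\Aalg,T)] = \Omega(\sqrt{T})$.

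To upgrade to a high-probability statement, I would redo the case split on the realized $N_P$ rather than its expectation: one of the events $\{N_P \ge \sqrt{T}/2\}$ or its complement has probability at least $1/2$. On the first event, a Bernstein bound applied to the conditional Binomial distribution of $\tilde{N}_1 \defeq \sum_t \1{A_t = \actProbe, M_t = 1}$ given $N_P$ shows $\tilde{N}_1 = \Omega(\sqrt{T})$ with constant probability (the deviation is only $O(\sqrt{\log(1/\delta)})$ because $p$ is tiny). On the second event, the probability of any probe on type $2$ is controlled by the same Markov-type argument, while a Chernoff bound on $\mathrm{Bin}(T,p)$ guarantees $\Omega(\sqrt{T})$ appearances of type $2$; intersecting these yields a constant-probability event on which regret is $\Omega(\sqrt{T})$, which is precisely what ``for sufficiently small $\delta$'' requires.

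The main obstacle is the calibration $p = 1/\sqrt{T}$: it is the balance point between the two regret sources, since over-probing pays $(\rho_1 - \rho_0)N_P$ while under-probing pays roughly $(\rho_3 - \rho_2)pT$ when the rare type goes undiscovered (an event of probability $\approx (1-p)^{N_P}$), and any other choice of $p$ yields a strictly weaker bound. A secondary delicate point is that the argument must hold for adaptive and randomized algorithms, but this is handled cleanly by the conditional independence $M_t \perp A_t \mid H_t$, which lets all expectations factor without further machinery.
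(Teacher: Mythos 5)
Your construction and expected-regret argument are essentially the paper's: a stochastic instance whose rare type has probability $1/\sqrt{T}$, with the bound coming from balancing the $(\rho_1-\rho_0)$ cost of redundant probes against the $\rho_3$-type cost of skipping an undiscovered rare item (the paper uses $C-1$ rare types and computes the probe budget needed to collect them all; your two-type instance and case split on $\E[N_P]$ is a legitimate simplification since only the $T$-dependence is claimed). Two steps need repair, however. In Case B (and again in your high-probability sketch) you treat the number of type-$2$ appearances as $\mathrm{Bin}(T,p)$ \emph{after} conditioning on the event $B$ that type $2$ is never probed, and you treat $\tilde N_1$ given $N_P$ as conditionally Binomial. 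Neither claim is justified: which rounds are probe rounds is chosen adaptively from the types revealed at earlier probe rounds, so conditioning on $B$ or on $N_P$ skews the law of the type sequence (an algorithm that probes every round has no type-$2$ appearances at all on $B$). The fix stays inside your framework: use the single-round independence you already noted to write $\E\bigl[\sum_t \1{M_t=2,\,A_t=\actSkip,\,2\notin\Mset_t}\bigr] = p\sum_t \P{A_t=\actSkip,\,2\notin\Mset_t} \ge p\left(T\,\P{B}-\E[N_P]\right)$, which is $\Omega(\sqrt{T})$ in Case B; and for concentration apply Azuma/Bernstein to the martingale differences $\1{A_t=\actProbe}\bigl(\1{M_t=1}-(1-p)\bigr)$ rather than to a ``conditional Binomial.''

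The more substantive gap is the high-probability part. Your plan yields an event of \emph{constant} probability (roughly $1/2$ times a constant) on which the regret is $\Omega(\sqrt{T})$, and you assert this is exactly what ``for sufficiently small $\delta$'' requires; it is not. The theorem asserts the regret is $\Omega(\sqrt{T})$ except on an event of probability at most $\delta$, for small $\delta$, so you must show the low-regret event has probability at most $\delta$, not merely at most some constant like $3/4$. The paper's (admittedly terse) argument is of this stronger form: to have a sub-$\sqrt{T}$ regret guarantee holding with probability $1-\delta$, the algorithm must on that event keep its redundant probes to $o(\sqrt{T})$ \emph{and} discover every type that appears $\Omega(\sqrt{T})$ times, and since probe decisions are independent of the current type, the chance that $o(\sqrt{T})$ probes ever coincide with an appearance of a $1/\sqrt{T}$-rare type is $o(1)$; hence the low-regret event itself has vanishing probability. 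Your argument can be upgraded along these lines by doing the case split on the realized $N_P$ at threshold $\epsilon\sqrt{T}$ and tracking how the failure probability scales with $\epsilon$ (the constant hidden in $\Omega(\cdot)$ then depends on $\delta$), but as written the constant-probability conclusion proves a weaker statement than the one claimed.
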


Note our goal here is to find a matching lower bound in terms of $T$.  We do not attempt to match dependence on other quantities like $C$, which are often less important than $T$.

The lower bound may seem to contradict $\ef$'s \emph{logarithmic} upper bound in \propref{prop:ef}.  However, that upper bound is problem specific and requires knowledge of $\mu_m$.  Without knowing $\mu_m$, the algorithm has to choose $\mu_m=\Theta(\frac{1}{\sqrt{T}})$ in the probing phrase; otherwise, there is a chance it may not be able to discover a type $M$ with $\mu(M)=\Omega(\frac{1}{\sqrt{T}})$, suffering $\Omega(\sqrt{T})$ regret.  With this value of $\mu_m$, the bound in \propref{prop:ef} has an $\tilde{O}(\sqrt{T})$ dependence.

\section{Application to PAC-MDP Lifelong RL} 
\label{sec:general}

Building on the OCCP results established in the previous section, we now turn to lifelong RL.

\subsection{Preliminaries}
\label{sec:prelim}

We consider RL~\cite{Sutton98Reinforcement} in discrete-time, finite MDPs specified by a five-tuple: $\langle\Sset,\Aset,P,R,\gamma\rangle$, where $\Sset$ is the set of states ($S\defeq\setcard{\Sset}$), $\Aset$ the set of actions ($A\defeq\setcard{\Aset}$), $P$ the transition probability function, $R:\Sset\times\Aset\to[0,1]$ the reward function, and $\gamma\in(0,1)$ the discount factor.  
Initially, $P$ and $R$ are unknown.  Given a policy $\pi:\Sset\to\Aset$, its state and state--action value functions are denoted by $V^\pi(s)$ and $Q^\pi(s,a)$, respectively.  The optimal value functions are $V^*$ and $Q^*$.  Finally, let $\vmax$ be a known upper bound of $V^*(s)$, which is at most $1/(1-\gamma)$ but can be much smaller.

Various frameworks have been studied to capture the learning speed of single-task online RL algorithms, such as regret analysis~\cite{Jaksch10Near}.  Here, we focus on another useful notion known as \emph{sample complexity of exploration}~\citep{Kakade03Sample}, or \emph{sample complexity} for short.  Some of our results, especially those related to cross-task exploration and OCCP, may also find use in regret analysis. 

Any RL algorithm $\Aalg$ can be viewed as a nonstationary policy, whose value functions, $V^\Aalg$ and $Q^\Aalg$, are defined similarly to the stationary-policy case.  When $\Aalg$ is run on an unknown MDP, we call it a mistake at step $t$ if the algorithm chooses a suboptimal action, namely, $V^*(s_t)-V^{\Aalg_t}(s_t)>\epsilon$.  
We define the sample complexity of $\Aalg$, $\zeta(\epsilon,\delta)$ as the 
maximum number of mistakes, with probability at least $\delta$. 
If $\zeta$ is polynomial in $S$, $A$, $1/(1-\gamma)$, $1/\epsilon$, and $\ln(1/\delta)$, then $\Aalg$ is called \emph{PAC-MDP}~\citep{Strehl09Reinforcement}.  

Most PAC-MDP algorithms~\cite{Kearns02Near,Brafman02Rmax,Strehl09Reinforcement}
 work by assigning maximum reward to state--action pairs 
that have not been visited often enough to obtain reliable transition/reward parameters.  
The \fmllrl\ algorithm used for LLRL~\cite{Brunskill13Sample} 
leverages a similar idea, where the current RL task is close to one of a finite set of known MDP models.

\subsection{Cross-task Exploration in Lifelong RL}

In lifelong RL, the agent seeks to maximize total reward as it acts in a sequence of $T$  tasks.  If the tasks are related, learning speed is expected to improve by transferring knowledge obtained from prior tasks.  Following previous work~\citep{wilson2007,Brunskill13Sample}, and motivated by many applications~\cite{Chu09Personalized,fern2014decision,nikolaidis2015efficient,liuEDM2015}, we assume a finite set $\Mset$ of possible MDPs.  The agent solves a sequence of $T$ tasks, with $M_t\in\Mset$ denoting the (unknown) MDP of task $t$.  Before solving the task, the agent does not know whether or not $M_t$ has been encountered before.  It then acts in $M_t$ for $H$ steps, where $H$ is given, and can take advantage of any information extracted from solving prior tasks $\{M_1,\ldots,M_{t-1}\}$.  
%
Our setting is more general, allowing tasks to be chosen \emph{adversarially}, in contrast to prior work that focused on the stochastic case~\cite{wilson2007,Brunskill13Sample}.

\begin{algorithm}[t]
\caption{Lifelong RL based on \fe}
\begin{algorithmic}[1] \label{alg:eee-fe}
\STATE \textbf{Input}: $\alpha\in(0,1)$, $m\in\Nset$, $L\in\Nset$
\STATE Initialize $\hat{\Mset}\leftarrow\emptyset$
\FOR{$t=1,2,\ldots$}
\STATE Generate a random number $\xi\sim \mathrm{Uniform}(0,1)$
\IF{$\xi<t^{-\alpha}$ \hspace{3mm}{\color{grey}{\small (probing to discover new MDP)\hspace{3mm}}}}
\STATE Run \pacexp\ with parameters $m$ and $L$ to fully explore all states in $M_t$, so that every action is taken in every state for at least $m$ times. \label{alg:eee-fe:explore:start}
\STATE After \pacexp\ finishes, choose actions by an optimal policy of the empirical model $\hat{M}_t$.
\IF{for all existing models $\hat{M}\in\hat{\Mset}$, $\hat{M}_t$ has a non-overlapping confidence intervals in some state--action pair's transition/reward parameters}
\STATE $\hat{\Mset} \leftarrow \hat{\Mset} \cup \{\hat{M}_t\}$ \label{alg:eee-fe:explore:end}
\ENDIF
\ELSE
\STATE Run \fmllrl\ with $\hat{\Mset}$ \label{alg:eee-fe:exploit}
\ENDIF
\ENDFOR
\end{algorithmic}
\end{algorithm}

In comparison to single-task RL, performing additional exploration 
in a task (potentially beyond that needed for reward maximization in the 
current task), may be advantageous in the LLRL setting, since 
such information may help the agent perform better in future tasks. 
Indeed, prior work~\cite{Brunskill13Sample} has demonstrated that learning 
the latent structure of the possible MDPs that may be encountered 
can lead to significant reductions in the sample complexity in 
later tasks. We can realize this benefit by explicitly identifying this 
latent shared structure. 

This observation inspired our abstraction of OCCP, which we now formalize its relation to LLRL. 
Here, the probing action ($\actProbe$) corresponds to doing full exploration in the current task, while the skipping action ($\actSkip$) corresponds to applying transferred knowledge to accelerate learning. We use our OCCP \fe\ algorithm resulting in \algref{alg:eee-fe}; overloading terminology, we refer to this LLRL algorithm as \fe.  In contrast, the two-phase LLRL algorithm of \namecite{Brunskill13Sample} essentially uses \ef\ to discover new MDPs, and is referred to as \ef.

At round $t$, if probing is to happen, \fe\ performs \pacexp~\cite{Guo15Concurrent}, outlined in \algref{alg:pacexp} of \appref{app:alg}, to do full exploration of $M_t$ to get an accurate empirical model $\hat{M}_t$.  To determine whether $M_t$ is new, the algorithm checks if $\hat{M}_t$'s parameters' confidence intervals are disjoint from every $\hat{M}\in\hat{\Mset}$ in at least one state--action pair.  If so, we add $\hat{M}_t$ to the set $\hat{\Mset}$.


If probing is \emph{not} to happen, 
the agent assumes $M_t\in\hat{\Mset}$, and follows the \fmllrl\ algorithm~\cite{Brunskill13Sample}, which is an extension of \rmax\ to work with finitely many MDP models.  With \fmllrl, the amount of exploration scales with the number of models, rather than the number of state--action pairs.  Therefore, the algorithm gains in sample complexity by reducing unnecessary exploration from transferring prior knowledge, \emph{if} the current task is already in $\hat{\Mset}$.

Note that \algref{alg:eee-fe} is a meta-algorithm, where single-task-RL components like \pacexp\ and \fmllrl\ may be replaced by similar algorithms.



\paragraph{Remark.} \fe\ may appear \naive\ or simplistic, as it decides whether to probe a 
new task before seeing any data in $M_t$.  
It is easy to allow the algorithm to switch 
from non-probing ($\actSkip$) to probing ($\actProbe$) while acting in 
$M_t$, whenever $M_t$ appears different from all MDPs in $\hat{\Mset}$ (again, by comparing confidence intervals of model parameters).  Although this change can be beneficial in practice, it does \emph{not} improve worst-case sample complexity: if we are in the non-probing case running \fmllrl\ in a MDP not in $\hat{\Mset}$, there is no guarantee to identify the current task as a new one.
This is because by assuming that the 
current MDP is one of the models in $\hat{\Mset}$, the learner may follow a policy that never sufficiently explores  \emph{informative} state--action pair(s) that could have revealed the current MDP is novel.
Therefore, from a theoretical (worst-case) perspective, it is not critical to allow the algorithm to switch to the probing mode.
 
Similarly, switching from probing to non-probing in the middle of a task is in general not helpful, as shown in the following example.  
Let $\Sset=\{s\}$ contain a single state, so $P(s|s,a)\equiv1$ and MDPs in $\Mset$ differ only in the reward function.  Suppose at round $t$, the learner has discovered a set of MDPs $\hat{\Mset}$ from the past, and chooses to probe, thus running \pacexp.  After some steps in $M_t$, if the learner switches to non-probing before trying every action $m$ times in all states, there is a risk of \emph{under-exploration}: $M_t$ may be a new MDP not in $\hat{\Mset}$; it has the same rewards on optimal actions for some $M\in\hat{\Mset}$, but has even higher reward for another action that is not optimal for any $M'\in\hat{\Mset}$.
By terminating exploration too early, the learner may fail to identify the optimal action in $M_t$, ending up with a poor policy.

\subsection{Sample-Complexity Analysis}
\label{sec:general-theory}

This section gives a sample-complexity analysis for \algref{alg:eee-fe}.  For convenience, we use $\theta_M$ to denote the dynamics of an MDP $M\in\Mset$: for each $(s,a)$, $\theta_M(\cdot|s,a)$ is an $(S+1)$-dimensional vector, with the first $S$ components giving the transition probabilities to corresponding next states, $P(s'|s,a)$, and the last component the average immediate reward, $R(s,a)$.  The model difference in $(s,a)$ between $M$ and $M'$, denoted $\|\theta_{M}(\cdot|s,a)-\theta_{M'}(\cdot|s,a)\|$, is the $\ell_2$-distance between the two vectors.  
Finally, we let $N$ be an upper bound on the number of next states in the transition models in all MDPs $M\in\Mset$; note that $N$ is no larger than $S$ but can be much smaller in many problems.

The following assumptions are made in the analysis:
\begin{compactenum}
\item{There exists a known quantity $\Gamma>0$ such that for every two distinct MDPs $M, M'\in\Mset$, there exists some $(s,a)$ so that $\|\theta_M(\cdot|s,a)-\theta_{M'}(\cdot|s,a)\|>\Gamma$;}
\item{There is a known diameter $D$, such that: for any $M\in\Mset$, any states $s$ and $s'$, there is a policy $\pi$ that takes an agent to navigate from $s$ to $s'$ in at most $D$ steps \emph{on average};}
\item{There are $H \ge H_0$ steps to solve each task $M_t$, where $H_0=O\left(SAN\log\frac{SAT}{\delta}\max\{\Gamma^{-2},D^2\}\right)$.}
\end{compactenum}
The first assumption requires two distinct MDPs differ by a sufficient amount in their dynamics in at least one state--action pair, and is made for convenience to encode prior knowledge about $\Gamma$.  Note that if $\Gamma$ is not known beforehand, one can set $\Gamma$ to $\Gamma_0=O\big(\epsilon(1-\gamma)/(\sqrt{N}\vmax)\big)$: if two MDPs differ by no more than $\Gamma_0$ in every state--action pair, an $\epsilon$-optimal policy in one MDP will be an $O(\epsilon)$-optimal policy in another.  The second and third assumptions are the major ones needed in our analysis.  The diameter $D$, 
introduced by \namecite{Jaksch10Near}, is typically not needed in \emph{single-task} sample-complexity analysis, but it seems nontrivial to avoid in a \emph{lifelong} learning setting.
Without the diameter or the long-horizon assumption, a learner can get stuck in a subset of states that prevent it from identifying the current MDP.  In such situations, it is unclear how the learner can \emph{reliably} transfer knowledge to better solve future tasks.

With these assumptions, the main result is as follows. Note that it is possible to use refined single-task analysis such as \namecite{Lattimore12Pac} to get better constants for $\rho_0$ and $\rho_3$ below.  We defer that to future work, and instead focus on showing the benefits of lifelong learning.
%
\newcommand{\textThmLlrlfe}{%
Let \algref{alg:eee-fe} with proper choices of parameters be run on a sequence of $T$ tasks, each from a set $\Mset$ of $C$ MDPs. Then, with prob. $1-\delta$, the number of steps in which the algorithm is not $\epsilon$-optimal \emph{across all $T$ tasks} is $\tilde{O}\big( \rho_0 T + C\rho_3\sqrt{T}\ln\frac{C}{\delta} \big)$, where $\rho_0=CD/\Gamma^2$ and $\rho_3=H$.
}
\begin{theorem} \label{thm:llrl-fe}
\textThmLlrlfe
\end{theorem}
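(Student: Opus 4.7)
The plan is to reduce the LLRL sample-complexity analysis to the OCCP regret bound of \thmref{thm:fe-main}. First I would identify the correspondence: each task $t$ plays the role of one OCCP round; the probe action $\actProbe$ is instantiated as running \pacexp\ followed by the empirical-model-optimal policy (lines \ref{alg:eee-fe:explore:start}--\ref{alg:eee-fe:explore:end} of \algref{alg:eee-fe}); the skip action $\actSkip$ is instantiated as running \fmllrl\ over $\hat{\Mset}$ (line \ref{alg:eee-fe:exploit}); the ``coupon type'' is the identity of $M_t$ in $\Mset$; and the per-round loss is the number of $\epsilon$-non-optimal steps suffered in that task. Under the $\Gamma$-separation and concentration arguments, the novelty test based on non-overlapping confidence intervals (line 8) is high-probability correct, so the set $\hat{\Mset}$ in \algref{alg:eee-fe} matches the discovered set $\Mset_t$ of the OCCP reduction.

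Next I would quantify the four OCCP constants in terms of LLRL quantities. For the probe branch, \pacexp\ may incur up to $H$ non-optimal steps per task in the worst case, so take $\rho_2, \rho_1 \le H$. For the skip branch when $M_t$ is novel (but mistakenly assumed to be in $\hat{\Mset}$), there is no $\epsilon$-optimality guarantee, so $\rho_3 = H$. For the skip branch when $M_t\in\hat{\Mset}$, I would invoke the \fmllrl\ sample complexity bound of \namecite{Brunskill13Sample}: once all candidate MDPs in $\hat{\Mset}$ are within confidence balls of the true $M_t$, elimination requires only $\tilde{O}(C D / \Gamma^2)$ suboptimal steps (roughly, $1/\Gamma^2$ samples in a distinguishing $(s,a)$ per competing model, reachable within $D$ steps on average by the diameter assumption), giving $\rho_0 = \tilde{O}(CD/\Gamma^2)$. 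These four constants satisfy $\rho_0 < \rho_1 \le \rho_2 < \rho_3$ once $H \ge H_0$.

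The main technical obstacle is to verify the soundness of the OCCP reduction, namely that \pacexp\ with threshold $m = \tilde{O}(1/\Gamma^2)$ really does, with high probability, (i) complete full exploration of $M_t$ within the given horizon $H$, and (ii) produce confidence intervals that are simultaneously small enough to distinguish any two distinct MDPs of $\Mset$ yet large enough to contain the true parameter. Step (i) uses the diameter assumption and a standard concurrent-exploration argument to bound the exploration time by $\tilde{O}(SANm\,D)$, which by assumption 3 fits into $H$; step (ii) follows from an $\ell_2$ concentration bound over $N$-sparse next-state distributions, together with a union bound across the $SA$ pairs, the $T$ tasks, and the $C$ candidate models, contributing the $\ln(SAT/\delta)$ factor in $H_0$. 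Assembling these guarantees allows the OCCP bookkeeping to be applied as if the probe/skip outcomes were deterministic.

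Finally I would invoke \thmref{thm:fe-main} with $\alpha=1/2$. Its high-probability bound gives regret $C^*\rho_3(\sqrt{T}\ln(C^*/\delta)+1)$ against the hypothetical strategy that knows every $M_t$ in advance, whose loss by \eqnref{eqn:optimal-loss} is $L^*(T) = \rho_2 C^* + \rho_0(T-C^*) \le \rho_2 C + \rho_0 T$. Adding these and absorbing the $\rho_2 C \le HC$ term into the $C\rho_3\sqrt{T}\ln(C/\delta)$ regret term (since $\rho_3=H$), the total number of $\epsilon$-non-optimal steps across all $T$ tasks is bounded by $\tilde{O}(\rho_0 T + C\rho_3\sqrt{T}\ln(C/\delta))$ with probability $1-\delta$ (after a final union bound that absorbs the failure probabilities of \pacexp, \fmllrl, and the OCCP algorithm into $\delta$), exactly as claimed.
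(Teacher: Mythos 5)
Your proposal is correct and follows essentially the same route as the paper's proof: the same reduction of per-task $\epsilon$-non-optimal steps to the four OCCP loss-matrix entries, with $\rho_0=\tilde{O}(CD/\Gamma^2)$ from \fmllrl\ under the diameter assumption, $\rho_3=H$ for exploiting on an undiscovered task, correctness of the confidence-interval novelty test supplied by the analogues of \lemref{lem:diameter} and \lemref{lem:correct-discovery}, and a final invocation of \thmref{thm:fe-main} plus a union bound. The only (immaterial) difference is that you bound the probe-branch cost crudely by $H$, whereas the paper charges $H_0(m)$ per probe plus an amortized single-task term per MDP; both are absorbed into the $\tilde{O}\big(\rho_0 T + C\rho_3\sqrt{T}\ln\frac{C}{\delta}\big)$ bound.
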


While single-task RL typically has a per-task sample complexity $\zeta_s$ that at least scales linearly with $SA$, \algref{alg:eee-fe} converges to a per-task sample complexity of $\tilde{O}(\rho_0)$, which is often much lower.  Furthermore, a bound on the \emph{expected} sample complexity can be obtained in a similar way, by the corresponding expected-regret bound in \thmref{thm:fe-main}. Intuitively, in the OCCP setting, we quantified the loss (equivalently, regret); in LLRL, the loss corresponds to number of non-$\epsilon$-optimal steps, and so a loss bound translates directly into a sample-complexity bound.
%
%


The proof (\appref{sec:llrl-proofs}) proceeds by analyzing the sample complexity bounds for all four possible cases (corresponding to the four entries in the OCCP loss matrix in \tblref{tbl:loss}) when solving the $M_t$, and then combining them with \thmref{thm:fe-main} to yield the desired results.  
A key step is to ensure that when probing happens, the type of $M_t$ will be discovered successfully with high probability.  This is achieved by a couple of key technical lemmas below, which also elucidate where our assumptions are used in the analysis.  

The first lemma ensures all state--actions can be visited sufficiently often in finite steps, when the MDP has a small diameter.  
For convenience, define $H_0(m)\defeq O(SADm)$.
\newcommand{\textLemDiameter}{%
For a given MDP, \pacexp\ with input $m \ge m_0$ and $L=3D$ will visit all state--action pairs at least $m$ times in no more than $H_0(m)$ steps with probability $1-\delta$, where $m_0=O\left(ND^2\log\frac{N}{\delta}\right)$ is some constant.
}
\begin{lemma} \label{lem:diameter}
\textLemDiameter
\end{lemma}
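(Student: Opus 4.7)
The plan is to decompose the execution of \pacexp\ into ``attempt phases'' of length exactly $L=3D$ steps each, with every phase dedicated to one currently under-explored target state--action pair $(s,a)$ (i.e., one that has not yet been visited $m$ times). Within a phase, \pacexp\ plans a policy in an \emph{optimistic} empirical model of $M_t$, where every state--action pair visited fewer than $m$ times is treated as teleporting to the target $(s,a)$. Because the true MDP has diameter $D$, the optimal such policy in the \emph{true} optimistic model has expected hitting time at most $D$; by Markov's inequality it reaches the target within $L=3D$ steps with probability at least $2/3$.

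The main obstacle, and the reason for the particular form of $m_0$, is bridging the gap between the true optimistic model and the empirical one that \pacexp\ actually plans in. I would handle this in two steps. First, a concentration lemma: once a state--action pair has been visited $k$ times, an $\ell_1$ concentration inequality for multinomials on support of size at most $N$ gives $\|\theta_M(\cdot|s,a)-\hat\theta(\cdot|s,a)\|_1 = O(\sqrt{N\log(N/\delta)/k})$ with probability $1-\delta/(SA)$; union-bounding over all $SA$ pairs and all polynomially many rounds incurs only logarithmic overhead. Second, a simulation-lemma style argument: over a horizon of $L=3D$ steps, errors in expected hitting time between the true and empirical optimistic models are bounded by $L\cdot \max_{s,a}\|\theta-\hat\theta\|_1 = O(D\cdot\sqrt{N\log(N/\delta)/m})$. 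Choosing $m\ge m_0=O(ND^2\log(N/\delta))$ makes this gap a small constant (say $\le D/2$), so that the expected hitting time in the empirical optimistic model is at most $3D/2$, and Markov's inequality still yields success probability at least $1/2$ per phase in the true MDP.

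Given this per-phase success probability, I would conclude with a counting argument. Every successful phase either visits the current target, or else some \emph{other} under-explored pair is visited along the way; in either case the total number of ``visit credits'' distributed to under-explored pairs is at least one. Since we need $SAm$ such credits in total, a Chernoff bound on i.i.d.\ Bernoulli$(1/2)$ trials shows that $O(SAm)$ phases suffice with probability $1-\delta/2$. Multiplying by the phase length $L=3D$ gives a total of $O(SADm)=H_0(m)$ steps, as required. The final high-probability bound then comes from union-bounding the concentration event and the Chernoff event, each at level $\delta/2$.

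The trickiest step, as noted, is controlling the quality of the empirical optimistic model uniformly over the sequential, data-dependent phases; this is where the assumption $m\ge m_0 = O(ND^2\log(N/\delta))$ is essential and where the $D^2$ factor comes in (one factor of $D$ from the planning horizon, another from converting per-step transition error to hitting-time error). Once that uniform-approximation step is in hand, the rest is a relatively standard \rmax-style counting plus Markov/Chernoff argument.
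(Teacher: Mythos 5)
Your proposal takes essentially the same route as the paper's proof (which itself defers to Guo et al.'s \pacexp\ analysis): length-$3D$ phases, $\ell_1$ concentration of the empirical model on known pairs, a simulation-lemma bridge whose cost is what forces $m_0=O(ND^2\log(N/\delta))$, a constant per-phase probability of reaching an under-explored pair via the diameter plus Markov, and a counting argument giving $O(SAm)$ phases and hence $O(SADm)=H_0(m)$ steps. Two constant-level points to tighten (neither changes $m_0$ or $H_0$): the transfer of expected (truncated) hitting time between true and empirical models over horizon $L$ costs on the order of $L^2\epsilon$ rather than $L\epsilon$ (the paper instead transfers the \emph{success probability} of the $3D$-step event, costing $O(L\epsilon)$, and also transfers it back to the true MDP, which your Markov step in the empirical model skips), and the per-phase successes are only conditionally bounded below given the history, not i.i.d., so the final count needs a martingale argument (the paper invokes Lemma~56 of Li et al.) rather than an i.i.d.\ Chernoff bound.
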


The second lemma establishes the fact that when \pacexp\ is run on a sequence of $T$ tasks, with high probability, it successfully infers whether $M_t$ has been included in $\hat{\Mset}$, for every $t$.  This result is a consequence of Lemma~\ref{lem:diameter} and the assumption involving $\Gamma$.
%
\newcommand{\textLemCorrectDiscovery}{%
With input parameters $H\ge H_0(m)$ and $m=72N\log\frac{4SAT}{\delta}\max\{\Gamma^{-2},D^2\}$ in \algref{alg:eee-fe}, the following holds with probability $1-2\delta$: for every task in the sequence, the algorithm detects it is a new task if and only if the corresponding MDP has not been seen before.
}
\begin{lemma} \label{lem:correct-discovery}
\textLemCorrectDiscovery
\end{lemma}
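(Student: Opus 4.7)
The plan is to bound two bad events and show that, outside their union (which has total probability at most $2\delta$), the novelty test in \algref{alg:eee-fe} is correct in every task. Let
\[
\Delta \defeq \sqrt{\frac{(N+1)\log(8(S+1)SAT/\delta)}{2m}}
\]
denote the per-$(s,a)$ Euclidean confidence radius the algorithm associates with an estimate of $\theta$ based on $m$ samples. With the chosen $m = 72 N \log(4SAT/\delta)\max\{\Gamma^{-2},D^2\}$, a short calculation gives $\Delta \le \Gamma/4$.

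The first bad event $E_1$ is that some probing round fails to visit every $(s,a)$ at least $m$ times within its $H$ steps. Since $m\ge 72 N D^2 \log(4SAT/\delta)$ exceeds the threshold $m_0$ in \lemref{lem:diameter} when that lemma is instantiated with confidence $\delta/T$, and $H\ge H_0(m)$ by assumption, each probing round completes exploration with probability at least $1-\delta/T$; a union bound over the at most $T$ probing rounds gives $\mathbb{P}(E_1)\le\delta$. The second bad event $E_2$ is that, in some probing round and some $(s,a)$, the empirical vector $\hat\theta_{M_t}(\cdot|s,a)$ lies outside the $\ell_2$-ball of radius $\Delta$ around $\theta_{M_t}(\cdot|s,a)$. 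Applying Hoeffding's inequality to each of the at most $N+1$ non-trivial coordinates (at most $N$ next-state probabilities plus the mean reward, all in $[0,1]$), summing squared coordinate errors to bound the $\ell_2$ distance, and union-bounding over at most $T$ probing rounds and $SA$ pairs, yields $\mathbb{P}(E_2)\le\delta$.

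Outside $E_1\cup E_2$, both directions of the claim follow. If $M_t$ coincides with some previously added $M$ whose empirical model $\hat M$ is in $\hat{\Mset}$, then for every $(s,a)$, $\|\hat\theta_{M_t}(\cdot|s,a)-\hat\theta_M(\cdot|s,a)\|\le 2\Delta$ by the triangle inequality, so the two confidence balls (each of $\ell_2$-radius $\Delta$) overlap, and $\hat M_t$ is correctly not added to $\hat{\Mset}$. Conversely, if $M_t$ is novel then the $\Gamma$-separation assumption yields some $(s,a)$ with $\|\theta_{M_t}-\theta_M\|>\Gamma$ against every previously discovered $M$; the reverse triangle inequality then gives $\|\hat\theta_{M_t}-\hat\theta_M\|>\Gamma-2\Delta\ge\Gamma/2>2\Delta$, the balls are disjoint in that $(s,a)$, and $\hat M_t$ is correctly added. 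The one mildly subtle point is choosing a single $m$ that simultaneously satisfies the exploration precondition of \lemref{lem:diameter} (the $D^2$ factor) and the $\Delta\le\Gamma/4$ requirement (the $\Gamma^{-2}$ factor); this is exactly why $\max\{\Gamma^{-2},D^2\}$ appears in the stated choice of $m$.
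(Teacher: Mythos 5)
Your proof is correct and follows essentially the same route as the paper's: condition on the event that \pacexp\ makes every state--action pair known in each probing round (via \lemref{lem:diameter} plus a union bound over tasks), add a per-$(s,a)$ concentration event union-bounded over states, actions and tasks, and then prove both directions of the detection claim by comparing confidence regions, using the $\Gamma$-separation for novel tasks and the triangle inequality for repeated ones. The only difference is technical rather than structural: you build a single $\ell_2$ confidence ball on the $(S{+}1)$-dimensional parameter vector via coordinate-wise Hoeffding over the at most $N{+}1$ non-trivial coordinates, whereas the paper bounds the transition estimate in $\ell_1$ (Kakade's Lemma~8.5.5) and the reward by scalar Hoeffding and then splits the $\Gamma^2$ gap into the two corresponding halves; both choices are compatible with the stated $m$ and give the same conclusion.
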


\begin{figure}[t]
\centering
\includegraphics[width=0.75\columnwidth]{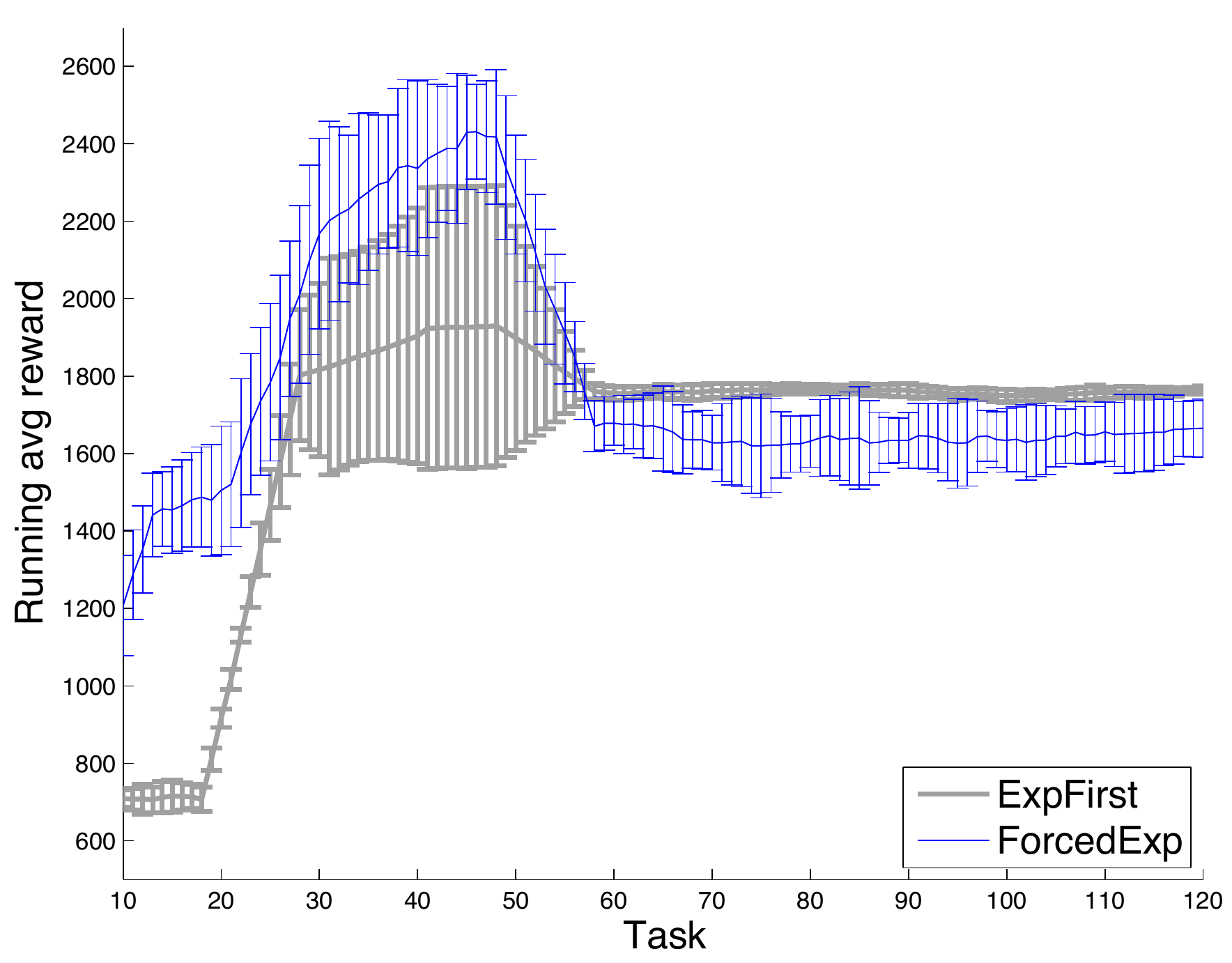} \vspace{-4mm}
\caption{Gridworld: nonstationary task selection. $10$-task smoothed running average of reward per task with 1 std error bars.} \label{fig:adversary} 
\vspace{-4mm}
\end{figure}

\section{Experiments}
\label{sec:exp}

Our simulation results 
illustrate that our lifelong RL setting can capture 
interesting domains, and to demonstrate the benefit of our introduced approach over a prior algorithm with formal sample-complexity guarantees~\cite{Brunskill13Sample} that is based on \ef. 
Due to space limitations, full details are provided in \appref{sec:exp_det}. \\
\noindent\textbf{Gridworld}.
We first consider a simple $5$ by $5$ stochastic gridworld domain with $4$ distinct MDPs to illustrate the salient properties of \fe.  
In each of the $4$ MDPs one corner offers high reward (sampled from a Bernoulli with parameter $0.75$) and all other rewards are $0$. In MDP 4 both the same corner as MDP 3 is rewarding, and the opposite corner is a Bernoulli with parameters $0.99$.  

In the stochastic setting when all tasks are sampled with equal probability, we compared \ef, \fe\ and \hmtl---a Bayesian hierarchical multi-task RL algorithm~\cite{wilson2007}.  As expected, all approaches did well in this setting. We next focus on comparing \ef\ and \fe which have finite sample guarantees.

We first consider tasks sampled from \emph{nonstationary} distributions.  
Across $100$ tasks all $4$ MDPs have identical frequencies, but an adversary chooses to only select from MDPs 1--3 during the first (probing-only) phrase of \ef\, 
before switching MDP 4 for $25$ tasks, and then switching back to randomly selecting the first three MDPs. 
MDP 4 can obtain similar rewards as MDP 1 using the same policy as for MDP 1, but can obtain higher rewards if the agent explicitly explores to discover the state with higher reward. \fe\ will randomly probe MDP 4, thus identifying this new optimal policy, which is why it eventually picks up the new MDP and obtains higher reward (See Figure~\ref{fig:adversary}).. \ef\ sometimes successfully infers the task belongs to a new MDP, but only if it happens to encounter 
the state that distinguished MDPs 1 and 4. This illustrates the benefit of continued active exploration in nonstationary or adversarial settings. 

\noindent\textbf{Simulated Human-Robot Collaboration}. 
We next consider a more interesting human-robot collaboration problem studied by  
\namecite{nikolaidis2015efficient}.
In this work, the authors learned $4$ models of user types 
based on prior data collected about a paired interaction task 
in which a human collaborates with a robot to paint a box. 
Using these types as a latent state in a 
mixed-observability MDP enabled significant improvements 
over not modeling such types in an experiment with real human 
robot collaborations. 
 
In our LLRL simulation each task was randomly sampled from the $4$ MDP models learned by \namecite{nikolaidis2015efficient}.
This domain was much larger than our grid world environment, involving $605$ states and $27$ actions.
It is typical in such personalization problems that not all user types have the same frequency.
Here, we chose the sampling distribution $\mu=(0.07,0.31,0.31,0.31)$.
The length of \ef's initial proving period is 
dominated by $\frac{1}{\mu_m}=\frac{1}{0.07}$. 
Experiments were repeated $30$ runs, each consisting of $80$ tasks.

The long probing phase of \ef\ is costly, especially if the total number of tasks is small, since too much time is spent on discovering new MDPs.
This is shown in Table 2, where our 
 \fe\ demonstrates a significant advantage by leveraging past 
experience much earlier than \ef, leading to significantly 
higher reward both during phase 1 and overall (Mann-Whitney 
U test, $p<0.001$ in both cases).  
Of course, eventually \ef\ will exhibit near-optimal performance 
in its second (non-probing) phase,
whereas \fe\ will continue probing with diminishing probability.
However, \fe\ can exhibit substantial jump-start benefit 
when the underlying MDPs are drawn from a stationary but nonuniform distribution.

\begin{table}[t] 
\small
\caption{Average per-task reward (and std. deviation) in each phase and overall.  Gains with statistical significance are highlighted.} 
\label{tab:hri_phase}
\begin{center} 
{\scriptsize
\vspace{-4mm}
\begin{tabular}{|c||c|c|c|}
\hline
& Phase 1 & Phase 2 & Overall ($80$ tasks) \\ \hline\hline
\ef & $18305 (1609)$ & $19428 (1960)$ & $18543 (1683)$ \\ \hline
\fe & $\textbf{18745 (482)}$ & $19012 (1904)$ & $\textbf{18801 (1923)}$ \\ \hline
\end{tabular}
}
\vspace{-4mm}
\end{center}
\end{table}

These results suggest \fe\ achieves comparabe or 
substantially better performance than prior methods, 
especially in nonuniform or nonstationary LLRL problems. 

\section{Conclusions}
\label{sec:conclude}

In this paper, we consider a class of lifelong RL problems that capture a broad range of interesting applications.  Our work emphasizes the need for efficient cross-task exploration that is unique in lifelong learning.  This led to a novel online coupon-collector problem, for which we give optimal algorithms with matching upper and lower regret bounds.  With this tool, we develop a new lifelong RL algorithm, and analyze its total sample complexity across a sequence of tasks.  Our theory quantifies how much gain is obtained by lifelong learning, compared to single-task learning, even if the tasks are adversarially generated.  The algorithm was empirically evaluated in two simulated problems, including a simulated human-robot collaboration task, demonstrating its relative strengths compared to prior work.

In the future, we are interested in extending our work to LLRL with continuous MDPs.  It is also interesting to investigate the empirical and theoretical properties of Bayesian approaches, such as Thompson sampling~\cite{osband2013more}, in lifelong RL.  These algorithms allow rich information to be encoded into a prior distribution, and empirically are often effective at taking advantage of such prior information.

\newpage

\bibliographystyle{aaai}
\bibliography{refs_short}

\begin{thebibliography}{}

\bibitem[\protect\citeauthoryear{Azar, Lazaric, and
  Brunskill}{2013}]{Azar13Sequential}
Azar, M.~G.; Lazaric, A.; and Brunskill, E.
\newblock 2013.
\newblock Sequential transfer in multi-armed bandit with finite set of models.
\newblock In {\em NIPS 26},  2220--2228.

\bibitem[\protect\citeauthoryear{Berenbrink and
  Sauerwald}{2009}]{Berenbrink09Weighted}
Berenbrink, P., and Sauerwald, T.
\newblock 2009.
\newblock The weighted coupon collector's problem and applications.
\newblock In {\em COCOON},  449--458.

\bibitem[\protect\citeauthoryear{Boneh and Hofri}{1997}]{Boneh97Coupon}
Boneh, A., and Hofri, M.
\newblock 1997.
\newblock The coupon-collector problem revisited --- a survey of engineering
  problems and computational methods.
\newblock {\em Communications in Statistics. Stochastic Models} 13(1):39--66.

\bibitem[\protect\citeauthoryear{{Bou Ammar}, Tutunov, and
  Eaton}{2015}]{Bouammar15Safe}
{Bou Ammar}, H.; Tutunov, R.; and Eaton, E.
\newblock 2015.
\newblock Safe policy search for lifelong reinforcement learning with sublinear
  regret.
\newblock In {\em ICML},  2361--2369.

\bibitem[\protect\citeauthoryear{Brafman and Tennenholtz}{2002}]{Brafman02Rmax}
Brafman, R.~I., and Tennenholtz, M.
\newblock 2002.
\newblock R-max---a general polynomial time algorithm for near-optimal
  reinforcement learning.
\newblock {\em JMLR} 3:213--231.

\bibitem[\protect\citeauthoryear{Brunskill and Li}{2013}]{Brunskill13Sample}
Brunskill, E., and Li, L.
\newblock 2013.
\newblock Sample complexity of multi-task reinforcement learning.
\newblock In {\em UAI},  122--131.

\bibitem[\protect\citeauthoryear{Bubeck and
  Cesa-Bianchi}{2012}]{Bubeck12Regret}
Bubeck, S., and Cesa-Bianchi, N.
\newblock 2012.
\newblock Regret analysis of stochastic and nonstochastic multi-armed bandit
  problems.
\newblock {\em Foundations and Trends in Machine Learning} 5(1):1--122.

\bibitem[\protect\citeauthoryear{Bubeck, Ernst, and
  Garivier}{2014}]{Bubeck14Optimal}
Bubeck, S.; Ernst, D.; and Garivier, A.
\newblock 2014.
\newblock Optimal discovery with probabilistic expert advice: Finite time
  analysis and macroscopic optimality.
\newblock {\em JMLR} 14(1):601--623.

\bibitem[\protect\citeauthoryear{Chu and Park}{2009}]{Chu09Personalized}
Chu, W., and Park, S.-T.
\newblock 2009.
\newblock Personalized recommendation on dynamic content using predictive
  bilinear models.
\newblock In {\em WWW},  691--700.

\bibitem[\protect\citeauthoryear{Chung}{2000}]{Chung00Course}
Chung, K.~L.
\newblock 2000.
\newblock {\em A Course in Probability Theory}.
\newblock Academic Press, 3rd edition.

\bibitem[\protect\citeauthoryear{Fern \bgroup et al\mbox.\egroup
  }{2014}]{fern2014decision}
Fern, A.; Natarajan, S.; Judah, K.; and Tadepalli, P.
\newblock 2014.
\newblock A decision-theoretic model of assistance.
\newblock {\em JAIR} 50(1):71--104.

\bibitem[\protect\citeauthoryear{Guo and Brunskill}{2015}]{Guo15Concurrent}
Guo, Z., and Brunskill, E.
\newblock 2015.
\newblock Concurrent {PAC} {RL}.
\newblock In {\em AAAI},  2624--2630.

\bibitem[\protect\citeauthoryear{Helmbold, Littlestone, and
  Long}{2000}]{Helmbold00Apple}
Helmbold, D.~P.; Littlestone, N.; and Long, P.~M.
\newblock 2000.
\newblock Apple tasting.
\newblock {\em Information and Computation} 161(2):85--139.

\bibitem[\protect\citeauthoryear{Jaksch, Ortner, and Auer}{2010}]{Jaksch10Near}
Jaksch, T.; Ortner, R.; and Auer, P.
\newblock 2010.
\newblock Near-optimal regret bounds for reinforcement learning.
\newblock {\em JMLR} 11:1563--1600.

\bibitem[\protect\citeauthoryear{Kakade}{2003}]{Kakade03Sample}
Kakade, S.
\newblock 2003.
\newblock {\em On the Sample Complexity of Reinforcement Learning}.
\newblock Ph.D. Dissertation, Gatsby Computational Neuroscience Unit,
  University College London, UK.

\bibitem[\protect\citeauthoryear{Kearns and Singh}{2002}]{Kearns02Near}
Kearns, M.~J., and Singh, S.~P.
\newblock 2002.
\newblock Near-optimal reinforcement learning in polynomial time.
\newblock {\em MLJ} 49(2--3):209--232.

\bibitem[\protect\citeauthoryear{Konidaris and
  Doshi-Velez}{2014}]{konidaris2014hidden}
Konidaris, G., and Doshi-Velez, F.
\newblock 2014.
\newblock Hidden parameter markov decision processes: An emerging paradigm for
  modeling families of related tasks.
\newblock In {\em 2014 AAAI Fall Symposium Series}.

\bibitem[\protect\citeauthoryear{Langford, Zinkevich, and
  Kakade}{2002}]{Langford02Competitive}
Langford, J.; Zinkevich, M.; and Kakade, S.
\newblock 2002.
\newblock Competitive analysis of the explore/exploit tradeoff.
\newblock In {\em ICML},  339--346.

\bibitem[\protect\citeauthoryear{Lattimore and Hutter}{2012}]{Lattimore12Pac}
Lattimore, T., and Hutter, M.
\newblock 2012.
\newblock {PAC} bounds for discounted {MDPs}.
\newblock In {\em ALT},  320--334.

\bibitem[\protect\citeauthoryear{Lazaric and
  Restelli}{2011}]{Lazaric11Transfer}
Lazaric, A., and Restelli, M.
\newblock 2011.
\newblock Transfer from multiple {MDPs}.
\newblock In {\em NIPS 24},  1746--1754.

\bibitem[\protect\citeauthoryear{Li}{2009}]{Li09Unifying}
Li, L.
\newblock 2009.
\newblock {\em A Unifying Framework for Computational Reinforcement Learning
  Theory}.
\newblock Ph.D. Dissertation, Rutgers University, New Brunswick, NJ.

\bibitem[\protect\citeauthoryear{Liu and Koedinger}{2015}]{liuEDM2015}
Liu, R., and Koedinger, K.
\newblock 2015.
\newblock Variations in learning rate: Student classification based on
  systematic residual error patterns across practice opportunities.
\newblock In {\em EDM}.

\bibitem[\protect\citeauthoryear{McAllester and
  Schapire}{2000}]{McAllester00Convergence}
McAllester, D.~A., and Schapire, R.~E.
\newblock 2000.
\newblock On the convergence rate of {Good-Turing} estimators.
\newblock In {\em COLT},  1--6.

\bibitem[\protect\citeauthoryear{Nikolaidis \bgroup et al\mbox.\egroup
  }{2015}]{nikolaidis2015efficient}
Nikolaidis, S.; Ramakrishnan, R.; Gu, K.; and Shah, J.
\newblock 2015.
\newblock Efficient model learning from joint-action demonstrations for
  human-robot collaborative tasks.
\newblock In {\em HRI},  189--196.

\bibitem[\protect\citeauthoryear{Osband, Russo, and
  Van~Roy}{2013}]{osband2013more}
Osband, I.; Russo, D.; and Van~Roy, B.
\newblock 2013.
\newblock ({M}ore) efficient reinforcement learning via posterior sampling.
\newblock In {\em NIPS},  3003--3011.

\bibitem[\protect\citeauthoryear{Ring}{1997}]{Ring97Child}
Ring, M.~B.
\newblock 1997.
\newblock {CHILD}: A first step towards continual learning.
\newblock {\em MLJ} 28(1):77--104.

\bibitem[\protect\citeauthoryear{Schmidhuber}{2013}]{Schmidhuber13Powerplay}
Schmidhuber, J.
\newblock 2013.
\newblock {PowerPlay}: Training an increasingly general problem solver by
  continually searching for the simplest still unsolvable problem.
\newblock {\em Frontiers in Psychology} 4.

\bibitem[\protect\citeauthoryear{Strehl, Li, and
  Littman}{2009}]{Strehl09Reinforcement}
Strehl, A.~L.; Li, L.; and Littman, M.~L.
\newblock 2009.
\newblock Reinforcement learning in finite {MDP}s: {PAC} analysis.
\newblock {\em JMLR} 10:2413--2444.

\bibitem[\protect\citeauthoryear{Sutton and
  Barto}{1998}]{Sutton98Reinforcement}
Sutton, R.~S., and Barto, A.~G.
\newblock 1998.
\newblock {\em Reinforcement Learning: An Introduction}.
\newblock Cambridge, MA: MIT Press.

\bibitem[\protect\citeauthoryear{Taylor and Stone}{2009}]{Taylor09Transfer}
Taylor, M.~E., and Stone, P.
\newblock 2009.
\newblock Transfer learning for reinforcement learning domains: A survey.
\newblock {\em JMLR} 10(1):1633--1685.

\bibitem[\protect\citeauthoryear{{Von Schelling}}{1954}]{Vonschelling54Coupon}
{Von Schelling}, H.
\newblock 1954.
\newblock Coupon collecting for unequal probabilities.
\newblock {\em The American Mathematical Monthly} 61(5):306--311.

\bibitem[\protect\citeauthoryear{White, Modayil, and
  Sutton}{2012}]{White12Scaling}
White, A.; Modayil, J.; and Sutton, R.~S.
\newblock 2012.
\newblock Scaling life-long off-policy learning.
\newblock In {\em IEEE ICDL-EPIROB},  1--6.

\bibitem[\protect\citeauthoryear{Wilson \bgroup et al\mbox.\egroup
  }{2007}]{wilson2007}
Wilson, A.; Fern, A.; Ray, S.; and Tadepalli, P.
\newblock 2007.
\newblock Multi-task reinforcement learning: a hierarchical {Bayesian}
  approach.
\newblock In {\em ICML},  1015--1022.

\end{thebibliography}

\newpage

\setcounter{secnumdepth}{2}
\appendix

\section{Proof for \propref{prop:ef}}
\label{sec:ef-lemma}

For convenience, statements of theorems, lemmas and propositions from the main text will be repeated when they are proved in the appendix.

\oldprop{prop:ef}
\textit{\textPropEf}

\begin{proof}
We start with the high-probability bound.  Fix any $M\in\Mset$.  The probability that it is \emph{not} sampled in the first $E$ rounds can be bounded as follows:
\begin{eqnarray}
  \lefteqn{\P{M\notin\{M_1,\ldots,M_E\}}} \nonumber \\
 &=& (1-\mu(M))^E \nonumber \\
 &\le& \exp(-\mu(M)E) \mathcomment{by inequality $1-x \le e^{-x}$} \nonumber \\
 &\le& \exp(\ln(\delta\mu_m)) \mathcomment{by definition, $\mu_m\le\mu(M)$} \nonumber \\
 &=& \mu_m\delta\,. \label{eqn:missing-mass-prob}
\end{eqnarray}
Consequently, we have
\[
\P{\exists M\in\Mset, M\notin\{M_1,\ldots,M_E\}} \le C\delta\mu_m \le \delta\,,
\]
where the first inequality is due to \eqnref{eqn:missing-mass-prob} and a union bound applied to all $M\in\Mset$, and the second inequality follows from the observation that $C \le 1/\mu_m$.

We have thus proved that, with probability at least $1-\delta$, all types in $\Mset$ will be sampled at least once in the first $E$ rounds, and \ef\ will have the minimal loss $\rho_0$ for all $t>E$.  Thus, with probability $1-\delta$, we have
\begin{equation}
L(\ef,T) = \rho_2C^* + \rho_1(E-C^*) + \rho_0(T-E)\,, \label{eqn:ef-hp-loss}
\end{equation}
where the first two terms correspond to loss incurred in the first $E$ rounds, and the last term corresponds to loss incurred in the remaining $T-E$ rounds.
Subtracting the optimal loss of \eqnref{eqn:optimal-loss} from \eqnref{eqn:ef-hp-loss} above gives the desired high-probability regret bound:
\begin{eqnarray}
R(\ef,T) &=& (\rho_1-\rho_0)(E-C^*) \label{eqn:ef-hp-regret} \\
  &\le& (\rho_1-\rho_0)E\,. \nonumber
\end{eqnarray}

We now prove the expected regret bound.  Since \eqnref{eqn:ef-hp-regret} holds with probability at least $1-\delta$, the expected total regret of \ef\ can be bounded as:
\begin{eqnarray}
\lefteqn{\bar{R}(\ef,T)} \nonumber \\
&\le& (\rho_1-\rho_0)(E-C^*) + (\rho_3-\rho_0)\delta T \nonumber \\
&\le& (\rho_1-\rho_0)E+(\rho_3-\rho_0)\delta T \nonumber \\
&\le& \frac{\rho_1-\rho_0}{\mu_m}\ln\frac{1}{\mu_m\delta}+(\rho_3-\rho_0)\delta T\,, \label{eqn:ef-exp-regret}
\end{eqnarray}
%
The right-hand side of the last equation is a function of $\delta$, in the form of $f(\delta)\defeq a-b\ln\delta+c\delta$, for $a=\frac{\rho_1-\rho_0}{\mu_m}\ln\frac{1}{\mu_m}$, $b=\frac{\rho_1-\rho_0}{\mu_m}$, and $c=(\rho_3-\rho_0)T$.  Because of convexity of $f$, its minimum is found by solving $f'(\delta)=0$ for $\delta$, giving
\[
\delta^*=\frac{b}{c} = \frac{\rho_1-\rho_0}{(\rho_3-\rho_0)\mu_mT}\,.
\]
Substituting $\delta^*$ for $\delta$ in \eqnref{eqn:ef-exp-regret} gives the desired bound.
\end{proof}

\section{Proofs for \fe}
\label{sec:fe-proofs}

This subsection gives complete proofs for theorems about \fe.  We start with a few technical results that are needed in the main theorem's proofs.

\subsection{Technical Lemmas}
\label{sec:fe-lemmas}

The following general results are the key to obtain our expected regret bounds for \fe.

\begin{lemma} \label{lem:m-loss}
Fix $M\in\Mset$, and let $1 \le t_1 < t_2 < \ldots < t_m \le T$ be the rounds for which $M_t=M$.  Then, the expected total loss incurred in these rounds is bounded as:
\[
\bar{L}_M < (m\rho_0+\rho_2-\rho_3) \bar{L}_1 + (\rho_3-\rho_0)\bar{L}_2 + \rho_1 \bar{L}_3\,,
\]
where
\begin{eqnarray*}
\bar{L}_1 &\defeq& \sum_i\prod_{j<i}(1-\eta_{t_j})\eta_{t_i}\,,\\
\bar{L}_2 &\defeq& \sum_i\prod_{j<i}(1-\eta_{t_j})\eta_{t_i}\cdot i\,,\\
\bar{L}_3 &\defeq& \sum_i\left(\prod_{j<i}(1-\eta_{t_j})\eta_{t_i}\sum_{j>i}\eta_{t_j}\right)\,.
\end{eqnarray*}
\end{lemma}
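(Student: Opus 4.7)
The plan is to compute $\bar L_M$ exactly by conditioning on \emph{when} $M$ is first probed, and then to identify the resulting sum with the three quantities $\bar L_1,\bar L_2,\bar L_3$ plus a manifestly nonnegative slack that can be absorbed into the coefficient of $\bar L_3$ using $\rho_0<\rho_1$.

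The key observation is that $M$ can only enter the discovered set $\mathcal M_t$ on a round $t_i$ where $M_{t_i}=M$, and on such a round it enters iff the algorithm probes. Since the probing coins $Y_{t_1},Y_{t_2},\ldots$ are mutually independent Bernoullis with parameters $\eta_{t_i}$, the event $A_i=\{M\text{ is first probed at round }t_i\}$ has probability $P(A_i)=\prod_{j<i}(1-\eta_{t_j})\cdot\eta_{t_i}$, and the complementary event $A_0=\{M\text{ is never probed}\}$ has probability $q=\prod_{j=1}^{m}(1-\eta_{t_j})$. With these notations, $\bar L_1=\sum_i P(A_i)=1-q$.

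First, I would compute the conditional expected loss $\E[L_M\mid A_i]$. Given $A_i$, the algorithm skipped on rounds $t_1,\ldots,t_{i-1}$ while $M$ was new (loss $\rho_3$ each), probed on round $t_i$ while $M$ was new (loss $\rho_2$), and on each later round $t_j$ ($j>i$) independently probes with probability $\eta_{t_j}$ (loss $\rho_1$) or skips (loss $\rho_0$) while $M$ is now known. Hence
\begin{equation*}
\E[L_M\mid A_i]=(i-1)\rho_3+\rho_2+(m-i)\rho_0+(\rho_1-\rho_0)\sum_{j>i}\eta_{t_j}.
\end{equation*}
Under $A_0$ every round incurs $\rho_3$, so $\E[L_M\mid A_0]\cdot P(A_0)=qm\rho_3$.

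Next, I would multiply $\E[L_M\mid A_i]$ by $P(A_i)=\prod_{j<i}(1-\eta_{t_j})\eta_{t_i}$, sum over $i$, and add the $A_0$ contribution. Using $\sum_i P(A_i)\cdot[(i-1)\rho_3+\rho_2+(m-i)\rho_0]=(m\rho_0+\rho_2-\rho_3)\bar L_1+(\rho_3-\rho_0)\bar L_2$ (which follows from the algebraic identity $(i-1)\rho_3+\rho_2+(m-i)\rho_0=(m\rho_0+\rho_2-\rho_3)+i(\rho_3-\rho_0)$ together with the definitions of $\bar L_1,\bar L_2$), and $\sum_i P(A_i)\sum_{j>i}\eta_{t_j}=\bar L_3$, I obtain the exact identity
\begin{equation*}
\bar L_M=(m\rho_0+\rho_2-\rho_3)\bar L_1+(\rho_3-\rho_0)\bar L_2+(\rho_1-\rho_0)\bar L_3+qm\rho_3.
\end{equation*}
Finally, to reach the stated bound, I would drop the $-\rho_0\bar L_3$ to replace $(\rho_1-\rho_0)\bar L_3$ by $\rho_1\bar L_3$; the residual $qm\rho_3$ can be absorbed into this slack (using $\rho_0>0$ together with the elementary bound $qm\le\bar L_3+O(1)$ coming from $q=\prod_j(1-\eta_{t_j})\le\prod_j e^{-\eta_{t_j}}$), giving the strict inequality claimed.

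\paragraph{Main obstacle.}
The heart of the argument is the clean conditioning on the first-probe round, which decouples the evolution of $X_i=\mathbf 1\{M\in\mathcal M_{t_i}\}$ (a function of past $Y$'s) from $Y_{t_i}$; this makes $\E[L_{t_i}]$ factor as $p_i\eta_{t_i}\rho_2+p_i(1-\eta_{t_i})\rho_3+(1-p_i)\eta_{t_i}\rho_1+(1-p_i)(1-\eta_{t_i})\rho_0$. The subtlety is bookkeeping the ``never probed'' contribution $qm\rho_3$ and showing it is dominated by the slack $\rho_0\bar L_3$ produced when one relaxes the coefficient of $\bar L_3$ from $\rho_1-\rho_0$ to $\rho_1$; this is the only place where the strictness of $\rho_0<\rho_1$ (rather than only $\rho_0\le\rho_1$) is used.
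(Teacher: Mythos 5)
Your conditioning on the first probe round is exactly the paper's argument, and your exact identity
\[
\bar L_M=(m\rho_0+\rho_2-\rho_3)\bar L_1+(\rho_3-\rho_0)\bar L_2+(\rho_1-\rho_0)\bar L_3+qm\rho_3,\qquad q=\prod_{j\le m}(1-\eta_{t_j}),
\]
is correct when the sums defining $\bar L_1,\bar L_2,\bar L_3$ run over $i\le m$. The gap is the last step, where you absorb the never-probed term $qm\rho_3$ into the slack $\rho_0\bar L_3$. First, $\rho_0>0$ is not among the hypotheses: only $\rho_0<\rho_1\le\rho_2<\rho_3$ is assumed, and $\rho_0=0$ is a legitimate (indeed canonical; the lower-bound proof normalizes to it) case, in which your slack vanishes while $qm\rho_3>0$. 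Second, even granting $\rho_0>0$, the inequality you invoke is false: the lemma allows arbitrary rates $\eta_1,\dots,\eta_T\in[0,1]$, and if $\eta_{t_j}\equiv\epsilon$ with $\epsilon m\to 0$ (or, with the rates $\eta_t=t^{-\alpha}$, if $M$ appears a bounded number of times and only in late rounds), then $q\to 1$ so $qm\approx m$ is unbounded, while $\bar L_3\le (1-q)\,m\epsilon=O(m^2\epsilon^2)\to 0$; hence neither $qm\le\bar L_3+O(1)$ nor $qm\rho_3\le\rho_0\bar L_3$ holds, and the bound as you have organized it cannot be closed.

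The repair---and this is how the paper's proof should be read---is to keep the never-discovered event inside the decomposition rather than treating it as a residual: introduce the atom $I=m+1$ with probability $q$, bound its loss $m\rho_3$ by the same expression $(I-1)\rho_3+\rho_2+\sum_{j>I}(\rho_0+\rho_1\eta_{t_j})$ evaluated at $I=m+1$, and let the sums defining $\bar L_1$ and $\bar L_2$ include this atom (it contributes $q$ to $\bar L_1$, $q(m+1)$ to $\bar L_2$, and nothing to $\bar L_3$). The term $qm\rho_3$ is then charged to $(\rho_3-\rho_0)\bar L_2$, not to the $\rho_0\bar L_3$ slack, and the downstream estimates in \propref{prop:fe} (in particular $\bar L_2\le 1/\eta_T$) still go through, so nothing is lost. (The paper's own write-up is terse about the $i=m+1$ term, but this is where your residual has to go; it cannot be hidden in $\bar L_3$.)
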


\begin{proof}
Let $\bar{L}_M(\fe)$ be the expected total loss incurred in the rounds $t$ where $M_t=M$: $1 \le t_1<t_2<\cdots<t_m\le T$ for some $m\ge0$.  Let $I\in\{1,2,\ldots,m,m+1\}$ be the random variable, so that $M$ is first discovered in round $t_I$.  That is,
\[
A_{t_j} = \begin{cases}
	0, & \text{if $j < I$} \\
	1, & \text{if $j = I$}\,.
	\end{cases}
\]
Note that $I=m+1$ means $M$ is never discovered; such a notation is for convenience in the analysis below.
The corresponding loss is given by
\[
(I-1)\rho_3 + \rho_2 + \sum_{j>I}\left(\rho_0\1{A_{t_j}=0} + \rho_1\1{A_{t_j}=1}\right)\,,
\]
whose expectation, conditioned on $I$, is at most
\[
(I-1)\rho_3 + \rho_2 + \sum_{j>I}\left(\rho_0 + \rho_1\eta_{t_j}\right)\,.
\]
Since \fe\ chooses to probe in round $t$ with probability $\eta_t$, we have that
\[
\P{I=i} = \prod_{j<i}(1-\eta_{t_j})\eta_{t_i}\,.
\]
Therefore, $\bar{L}_M(\fe)$ can be bounded by
\begin{align*}
 	& \bar{L}_M \\
\le	& \sum_{i=1}^{m+1}\P{I=i} \left((I-1)\rho_3 + \rho_2 + \sum_{j>I}\left(\rho_0 + \rho_1\eta_{t_j}\right)\right) \\
= & (m\rho_0+\rho_2-\rho_3) \bar{L}_1 + (\rho_3-\rho_0) \bar{L}_2 + \rho_1 \bar{L}_3,,
\end{align*}
where $\bar{L}_1$, $\bar{L}_2$ and $\bar{L}_3$ are given in the lemma statement.
\end{proof}

Now we can obtain the following proposition:

\begin{proposition} \label{prop:fe}
If we run \fe\ with non-increasing exploration rates $\eta_1 \ge \cdots \ge \eta_T>0$, 
then
$$\E[L(\fe,T)] \le \rho_0T + \frac{C^*\rho_3}{\eta_T} + \rho_1 \sum_{t=1}^T \eta_t.
$$
\end{proposition}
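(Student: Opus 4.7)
The plan is to apply Lemma~\ref{lem:m-loss} to each distinct coupon type appearing in $(M_1,\ldots,M_T)$ and then sum. Let $\{M^{(1)},\ldots,M^{(C^*)}\}$ be the distinct types, with counts $m_k$ and occurrence rounds $t_1^{(k)}<\cdots<t_{m_k}^{(k)}$. Since losses decompose across rounds into contributions from each type, $\E[L(\fe,T)]=\sum_{k=1}^{C^*}\bar{L}_{M^{(k)}}$, and Lemma~\ref{lem:m-loss} reduces the task to bounding the three quantities $\bar{L}_1$, $\bar{L}_2$, $\bar{L}_3$ for each type.

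For a fixed type $M$ with count $m$ and occurrence rounds $t_1<\cdots<t_m$, I would bound these as follows. Since $\bar{L}_1 = \P{I\le m}\in[0,1]$ and $\rho_2-\rho_3<0$, the contribution $(\rho_2-\rho_3)\bar{L}_1\le 0$ can be absorbed into $m\rho_0\bar{L}_1\le m\rho_0$ to obtain $(m\rho_0+\rho_2-\rho_3)\bar{L}_1\le m\rho_0$. For $\bar{L}_3$, swapping the order of summation shows that each $\eta_{t_j}$ is weighted by $\P{I<j}\le 1$, yielding $\bar{L}_3\le\sum_{j=1}^m \eta_{t_j}$. The main technical step is bounding $\bar{L}_2=\E[I\,\1{I\le m}]$: here I would exploit the non-increasing assumption on $(\eta_t)$, which guarantees $\eta_{t_j}\ge\eta_T$ for every $j$, and couple $I$ with a geometric random variable $G$ of parameter $\eta_T$. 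Since $\P{I\ge i}=\prod_{j<i}(1-\eta_{t_j})\le(1-\eta_T)^{i-1}=\P{G\ge i}$, stochastic dominance gives $\bar{L}_2\le\E[G]=1/\eta_T$.

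Combining these three bounds through Lemma~\ref{lem:m-loss} yields $\bar{L}_M \le m\rho_0 + (\rho_3-\rho_0)/\eta_T + \rho_1\sum_{j=1}^m \eta_{t_j}$. Summing over the $C^*$ distinct types and using the two disjoint-round identities $\sum_k m_k=T$ and $\sum_k\sum_j\eta_{t_j^{(k)}}=\sum_{t=1}^T\eta_t$, together with the crude bound $\rho_3-\rho_0\le\rho_3$, produces the claimed inequality.

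The main obstacle is the $\bar{L}_2$ step, which critically depends on the non-increasing condition on $(\eta_t)$ via the geometric-domination argument. A second subtlety is that the coefficient $m\rho_0+\rho_2-\rho_3$ in Lemma~\ref{lem:m-loss} can be negative when $m$ is small, so one cannot apply $\bar{L}_1\le 1$ naively; I resolve this by exploiting the sign of $\rho_2-\rho_3<0$ to drop the non-positive contribution, implicitly using $\rho_0\ge 0$ as in the OCCP loss-matrix setting.
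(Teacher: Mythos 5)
Your proposal is correct and follows essentially the same route as the paper's proof: apply Lemma~\ref{lem:m-loss} to each type, bound $\bar{L}_1\le 1$, bound $\bar{L}_2\le 1/\eta_T$ using the monotonicity $\eta_{t_j}\ge\eta_T$ (your geometric stochastic-dominance coupling is just a repackaging of the paper's tail-sum bound $\sum_{i=1}^m\P{I\ge i}\le\sum_{i=1}^m(1-\eta_T)^{i-1}\le 1/\eta_T$), bound $\bar{L}_3\le\sum_{j}\eta_{t_j}$, and then sum over the $C^*$ distinct types using disjointness of their occurrence rounds. Your explicit treatment of the possibly negative coefficient $m\rho_0+\rho_2-\rho_3$ (and the crude step $\rho_3-\rho_0\le\rho_3$) makes explicit a point the paper passes over silently, but it does not change the argument.
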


\begin{proof}
For each $M\in\Mset$, \lemref{lem:m-loss} gives an upper bound of loss incurred in rounds $t$ for which $M_t=M$:
\[
\bar{L}_M \le (m\rho_0+\rho_2-\rho_3)\bar{L}_1 + (\rho_3-\rho_0) \bar{L}_2 + \rho_1 \bar{L}_3\,,
\]
where $\bar{L}_1$, $\bar{L}_2$ and $\bar{L}_3$ are given in \lemref{lem:m-loss}.  We now bound the three terms of $\bar{L}_M(\fe)$, respectively.

To bound $\bar{L}_1$, we define a random variable $I$, taking values in $\{1,2,\ldots,m,m+1\}$, whose probability mass function is given by
\begin{eqnarray}
\P{I=i} = \begin{cases}
	\prod_{j<i}\left(1-\eta_{t_j}\right)\eta_{t_i}, & \text{if $i \le m$} \\
	\prod_{j\le m}\left(1-\eta_{t_j}\right), & \mbox{if $i = m+1$.}
\end{cases} \label{eqn:aux-rv}
\end{eqnarray}
Therefore, $I$ is like a geometrically distributed random variable, except that the parameter for the $i$th draw is not the same and is $\eta_{t_i}$.  Consequently,
\begin{align*}
\bar{L}_1 = \sum_i \P{I=i} \le 1\,.
\end{align*}

To bound $\bar{L}_2$, we use the same random variable $I$:
\begin{align*}
\bar{L}_2 &= \sum_{i=1}^m \P{I=i} \cdot i \\
  &\le \sum_{i=1}^m \P{I \ge i} \mathcomment{Corollary of Theorem~3.2.1 of \cite{Chung00Course}} \\
  &= \sum_{i=1}^m \prod_{j<i} (1-\eta_{t_j}) \mathcomment{By definition of $I$ in \eqnref{eqn:aux-rv}} \\
  &\le \sum_{i=1}^m \prod_{j<i} (1-\eta_{t_T}) \mathcomment{By assumption that $\eta_1\ge\cdots\ge\eta_T$} \\
  &= \frac{1}{\eta_T}\left(1-(1-\eta_T)^m\right) \le \frac{1}{\eta_T}\,.
\end{align*}

To bound $\bar{L}_3$, we have
\begin{align*}
\bar{L}_3 &\le \sum_{i=1}^m\prod_{j<i}(1-\eta_{t_j})\eta_{t_i}\sum_{j=1}^m \eta_{t_j}
= \bar{L}_1 \sum_{j=1}^m\eta_{t_j} \le \sum_{j=1}^m\eta_{t_j}\,.
\end{align*}

Putting all three bounds above, we have
\[
\bar{L}_M(\fe) \le m \rho_0 + \frac{\rho_3-\rho_0}{\eta_T} + \rho_1 \sum_{j=1}^m\eta_{t_j}\,.
\]

Now sum up all $\bar{L}_M(\fe)$ over all $M\in\Mset$ that appear in the sequence $(M_t)_t$, and we have
\[
\E[L(\fe,T)] \le \rho_0 T + \frac{C^* \rho_3}{\eta_T} + \rho_1 \sum_{i=1}^T \eta_{t_i}\,.
\]
\end{proof}

\subsection{Proof for \thmref{thm:fe-main}}
\label{sec:fe-main}

\oldthm{thm:fe-main}
\textit{\textThmFeMain}
\thmskip

\begin{proof}
The proof is split into two parts, for the two stated bounds.

\noindent\textbf{High-probability Regret Bound.}
Fix any $M\in\Mset$, and let $1 \le t_1 < t_2 < \ldots < t_m \le T$ be the rounds for which $M_t=M$.  Then, for any $m' \le m$, we can upper-bound the probability that $M$ remains undiscovered after the first $m'$ rounds for which $M_t=M$:
\[
\P{M \notin \Mset_{t_{m'}+1}} = \prod_{i=1}^{m'} (1-\eta_{t_i}) < \exp(-\sum_{i=1}^{m'}\eta_{t_i})\,,
\]
where the inequality is due to the fact that $1-x \le e^{-x}$.  We will show that for sufficiently large $m'$, the right-hand side above, $\exp(-\sum_{i=1}^{m'}\eta_{t_i})$, is at most $\delta/C^*$; in other words, with probability at least $1-\delta/C^*$, item $M$ will be discovered after appearing $m'$ times for sufficiently large $m'$.  Indeed,
\begin{eqnarray*}
\sum_{i=1}^{m'}\eta_{t_i}
  &\ge& \sum_{t=T-m'+1}^T \eta_t \mathcomment{monotonicity of $(\eta_t)_t$} \\
  &=& \sum_{t=T-m'+1}^T t^{-\alpha} \mathcomment{definition of $\eta_t$} \\
  &\ge& \int_{T-m'+1}^T t^{-\alpha}dt \\
  &=& \frac{1}{1-\alpha} \left(T^{1-\alpha} - (T-m'+1)^{1-\alpha}\right) \\
  &\ge& \frac{1}{1-\alpha} \left.\frac{d}{dt}t^{1-\alpha}\right|_{t=T} \cdot (T-(T-m'+1)) \\
  & & \text{(concavity of $t^{1-\alpha}$)} \\
  &=& T^{-\alpha}(m'-1)\,.
\end{eqnarray*}
Therefore, we will have $\P{M \notin \Mset_{t_{m'}+1}} \le \delta/C^*$ if $T^{-\alpha}(m'-1) \ge \ln\frac{C^*}{\delta}$, or equivalently, $m' \ge T_0$, where
\[
T_0 = T^\alpha\ln\frac{C^*}{\delta}+1\,.
\]
It follows that, with probability at least $1-\delta/C$, the total loss associated with item $M$ (that is, the total loss accumulated in $\{t_1,t_2,\ldots,t_m\}$) is at most
\begin{equation}
\rho_3 \min\{m,T_0\} + \rho_0(m-T_0)_+\,, \label{eqn:fe-hp-m-loss}
\end{equation}
where $(x)_+\defeq\max\{x,0\}$.

Define $\Mset^*\defeq\{M_1,\ldots,M_T\}\subseteq\Mset$ be the set of types that appear in the sequence $(M_t)_t$.  Clearly, $C^*=\setcard{\Mset^*}$.  Summing \eqnref{eqn:fe-hp-m-loss} over all $M\in\Mset^*$ and applying a union bound, we have the following that holds with probability at least $1-\delta$:
\begin{eqnarray}
\lefteqn{L(\fe,T)} \nonumber \\
  &\le& \sum_{M\in\Mset^*} \big( \rho_3 \min\{m(M),T_0\} + \rho_0(m(M)-T_0)_+ \big) \nonumber \\
  &\le& C^* \rho_3 T_0 + \sum_{M\in\Mset^*} \rho_0(m(M)-T_0)_+\,, \label{eqn:fe-hp-loss}
\end{eqnarray}
where $m(M) \defeq \setcard{\{1 \le t \le T \mid M_t=M\}}$ is the number of times $M$ appears in $T$ rounds.  Now consider the optimal yet hypothetical strategy, whose total loss, given in \eqnref{eqn:optimal-loss}, can be written as
\begin{eqnarray}
L^*(T)\,\, = & \sum_{M\in\Mset^*}& \big(\rho_2 + \rho_0 (\min\{m(M),T_0\}-1) \nonumber \\
&& + \rho_0(m(M)-T_0)_+ \big)\,. \label{eqn:optimal-loss-again}
\end{eqnarray}
In \eqnref{eqn:optimal-loss-again}, for each $M\in\Mset^*$, the first two terms correspond to the loss accumulated in the first $\min\{m(M),T_0\}$ times where $M_t=M$, and the last term for the remaining rounds where $M_t=M$.  It then follows from Equations~\ref{eqn:fe-hp-loss} and \ref{eqn:optimal-loss-again} that, with probability at least $1-\delta$,
\[
R(\fe,T) \le C^*\rho_3 T_0 = C^*\rho_3(T^\alpha\ln\frac{C^*}{\delta}+1)\,,
\]
as stated in the theorem.

\noindent\textbf{Expected Regret Bound.}
Given polynomial exploration rates $\eta_t=t^{-\alpha}$, we have
\begin{eqnarray*}
\sum_{t=1}^T \eta_t &=& 1 + \sum_{t=2}^T t^{-\alpha} \\
&\le& 1 + \int_1^T t^{-\alpha} dt \\
&=& 1 + \frac{1}{1-\alpha}\left. t^{1-\alpha}\right|_{t=1}^T \\
&=& 1 + \frac{1}{1-\alpha}\left(T^{1-\alpha}-1\right) \\
&\le& \frac{T^{1-\alpha}}{1-\alpha}\,.
\end{eqnarray*}
The total regret follows immediately from \propref{prop:fe}.  Furthermore, if one sets $\alpha=1/2$, the regret bound becomes $(C^*\rho_3+2\rho_1)\sqrt{T}=O(\sqrt{T})$.
\end{proof}

\section{Proof for \thmref{thm:lb}}
\label{sec:proof-lb}

\oldthm{thm:lb}
\textit{\textThmLb}

\begin{proof}
We construct a stochastic OCCP with $\Mset=\{1,2,\ldots,C\}$ and distribution $\mu$ so that
\[
\mu(M) = \begin{cases} \mu_m, & \mbox{if $M < C$} \\ 1-C\mu_m, & \mbox{if $M=C$}\,, \end{cases}
\]
where $\mu_m=1/\sqrt{T} \ll 1$.  For every $M\in\Mset$, define $T_M\in\{1,\ldots,T,\infty\}$ as the first time $M$ is collected; that is
\[
T_M \defeq \min \{t \mid M_t=M, A_t=\actProbe\}\,,
\]
with the convention that $T_M=\infty$ if $\{t \mid M_t=M, A_t=\actProbe\}=\emptyset$.  Furthermore, let $1\le t_1 < t_2 < \cdots < t_E \le T$ be the rounds in which probing ($A_t=\actProbe$) happens; denote by $\Eset$ the set $\{t_1,t_2,\ldots,t_E\}$.  Since the two random variables $M_t$ and $A_t$ are independent, we have for any $i \in \{1,2,\ldots,E\}$ and any $M\in\Mset$ that
\[
\P{M_{t_i}=M} = \mu(M)\,.
\]

We start with the expected-regret lower bound and let $\Aalg$ be any admissible algorithm.  Conditioning on $\Eset$ being the rounds of probing, we want to lower bound the number $E$ of exploration rounds so that the probability of \emph{not} discovering \emph{all} items in $\Mset$ is at most $\delta$ (which is necessary for the expected regret to be $O(\sqrt{T})$).  First, note that the events $\{T_M<\infty\}_{M\in\Mset}$ are negatively correlated, since discovering some $M_1$ in $\Eset$ can only decrease the probability of discovering $M_2\ne M_1$ in $\Eset$.  Therefore, we have
\begin{eqnarray*}
\P{\forall M, T_M < \infty} &\le& \prod_{M\in\Mset}\P{T_M<\infty} \\
&\le& \prod_{M=1}^{C-1} \P{T_M<\infty} \\
&\le& (1-(1-\mu_m)^E)^{C-1}\,.
\end{eqnarray*}
Making the last expression to be $1-\delta$, we have
\begin{align*}
E &= \frac{\ln\left(1-(1-\delta)^{\frac{1}{C-1}}\right)}{\ln(1-\mu_m)} \\
  &= \Omega\left(\frac{\ln\left(1-(1-\frac{\delta}{C})\right)}{-\mu_m}\right) \\
  &= \Omega\left(\frac{1}{\mu_m}\ln\frac{C}{\delta}\right)\,,
\end{align*}
for sufficiently small $\mu_m$ and $\delta$.

For simplicity, assume $\rho_0=0$ without loss of generality; otherwise, we can just define a related problem with $\rho_i'\defeq\rho_i-\rho_0$, where the loss is just shifted by a constant and the regret remains unchanged.  With this assumption, the optimal expected loss given in \eqnref{eqn:optimal-loss} becomes $L^*(T)=C^*\rho_2$.

With $\rho_0=0$, the expected loss of $\Aalg$ is at least $(E-C^*)\rho_1+C^*\rho_2+\delta(T-E)\mu_m\rho_3$, where the first two terms are for the loss incurred during the $E$ probing rounds; and the last term for the $\delta$-probability event that some item is not discovered in the probing rounds, which leads to $\rho_3$ loss when it is encountered in any of the remaining $T-E$ rounds.

The regret of $\Aalg$, by comparing its loss to $L^*(T)$, can be lower bounded by
\[
(E-C^*)\rho_1+\delta(T-E)\mu_m\rho_3 = \Omega\left(\rho_3T\mu_m+\frac{\rho_1}{\mu_m}\ln\frac{C^*}{\delta}\right)\,,
\]
giving an expected-regret lower bound by observing the fact that $\mu_m=1/\sqrt{T}$.

The high-probability lower bound can be proved by very similar calculations, with the observation that all $C$ types need to be collected in order to have a regret bound that holds with probability $1-\delta$, for sufficiently small $\delta$.
\end{proof}

\section{Algorithm Pseudocode}
\label{app:alg}

The following algorithm, \pacexp\ of \namecite{Guo15Concurrent}, is a key component in \algref{alg:eee-fe}.  It takes as input two parameters: threshold $m$ for determining a state--action pair is known or not, and planning horizon $L$ that is used to compute an exploration policy.

\begin{algorithm}[h]
\caption{\pacexp\ of \namecite{Guo15Concurrent}} \label{alg:pacexp}
\begin{algorithmic}[1]
\item \textbf{Input:} $m$ (known threshold), $L$ (planning horizon)
\WHILE{some $(s,a)$ has not been visited at least $m$ times}
\STATE Let $s$ be the current state
\IF{all $a$ have been tried $m$ times}
\STATE{Start a new $L$-step episode}
\STATE{Construct an empirical known-state MDP $\hat{M}_K$ with the reward of all known $(s,a)$ pairs set to $0$, all unknown set to $1$ (maximum reward value), the transition model of all known $(s,a)$ pairs set to the estimated parameters and the unknown to self loops}
\STATE{Compute an optimistic $L$-step policy $\hat{\pi}$ for $\hat{M}_K$}
\STATE{From the current state, follow $\hat{\pi}$ for $L$ steps, or until an unknown state is reached}
\ELSE
\STATE{Execute $a$ that has been tried the least}
\ENDIF
\ENDWHILE
\end{algorithmic}
\end{algorithm}

\section{Proofs for LLRL Sample Complexity}
\label{sec:llrl-proofs}

This section provides details of the sample-complexity analysis of \algref{alg:eee-fe}, leading to the main result of \thmref{thm:llrl-fe}.

\subsection{Proof of \lemref{lem:diameter}}
\label{app:diameter-proof}

\oldlem{lem:diameter}
\textLemDiameter

\begin{proof}
The proof follows closely to that of \namecite{Guo15Concurrent}.  Consider the beginning of an episode, and let $K$ be the set of known state--action pairs which have been visited by the agent at least $m$ times.  For each $(s,a)\in k$, the $\ell_1$ distance between the empirical estimate and the actual next-state distribution is at most (Lemma~8.5.5 of \namecite{Kakade03Sample}): $\alpha=\sqrt{\frac{8N}{m}\log\frac{2N}{\delta}}$. 
Let $M_K$ be the known-state MDP, which is identical to $\hat{M}_K$ except that the transition probabilities are replaced by the true ones for known state--action pairs.  Following the same line of reasoning as \namecite{Guo15Concurrent}, one may lower-bound the probability that an unknown state is reached within the episode by
$p_e \ge 1/6 - 3\alpha D$.
Therefore, $p_e$ is bounded by $1/12$ as long as $\alpha D \le 1/36$.  The latter is guaranteed if
$m \ge m_0 = O\left( ND^2\log\frac{N}{\delta} \right)$.
The rest of the proof is the same as \namecite{Guo15Concurrent}, invoking Lemma~56 of \namecite{Li09Unifying} to get an upper bound of $H$, stated in the lemma as $H_0(m)$.
\end{proof}

\subsection{Proof of \lemref{lem:correct-discovery}}
\label{sec:correct-discovery-proof}

\oldlem{lem:correct-discovery}
\textLemCorrectDiscovery

\begin{proof}
For task $M_t$, let $\Event_t$ be the event that all state--action pairs become known after $H$ steps; \lemref{lem:diameter} with a union bound shows all events $\{\Event_t\}_{t\in\{1,2,\ldots,T\}}$ hold with probability at least $1-\delta$.  For every fixed $t$, under event $\Event_t$, every state--action pair has at least $m$ samples to estimate its transition probabilities and average reward after $H$ steps.  Applying Lemma~8.5.5 of \namecite{Kakade03Sample} on the transition distribution, we can upper bound, with probability at least $1-\frac{\delta}{2SAT}$, the $\ell_1$ error of the transition probability estimates by:
\[
\epsilon_T = \sqrt{\frac{8N}{m}\log\frac{4SAT}{\delta}}\le\frac{\Gamma}{3}\,.
\]
Similarly, an application of Hoeffding's inequality gives the following upper bound, with probability at least $1-\frac{\delta}{2SAT}$, on the reward estimate:
\[
\epsilon_R = \sqrt{\frac{2}{m}\log\frac{4SAT}{\delta}}\le\frac{\Gamma}{6\sqrt{N}}\,.
\]
Applying a union bound over all states, actions, and tasks, the above concentration results hold with probability at least $1-\delta$ for an agent running on $T$ tasks.  The rest of the proof is to show that task identification succeeds when the above concentration inequalities hold.

To do this, consider the following two mutually exclusive cases:
\begin{compactenum}
\item{If $M_t$ is new, then, by assumption, for every $M'\in\hat{\Mset}$, there exists some $(s,a)$ for which the two models differ by at least $\Gamma$ in $\ell_2$ distance; that is, $\|\theta_{M_t}(\cdot|s,a)-\theta_{M'}(\cdot|s,a)\|_2 \ge \Gamma$.  It follows from the equality,
\begin{eqnarray*}
\lefteqn{\|\theta_{M_t}(\cdot|s,a)-\theta_{M'}(\cdot|s,a)\|_2^2} \\
&=& \sum_{1 \le s' \le S} \left(\theta_{M_t}(s'|s,a)-\theta_{M'}(s'|s,a)\right)^2 \\
& & \mathcomment{error in transition probability estimates} \\
& & + \left(\theta_{M_t}(S+1|s,a)-\theta_{M'}(S+1|s,a)\right)^2\,, \\
& & \mathcomment{error in reward estimate}
\end{eqnarray*}
that at least one of two terms on the right-hand side above is at least $\Gamma^2/2$.

If the first term is larger than $\Gamma^2/2$, then the $\ell_1$ distance between the two next-state transition distributions is at least $\Gamma/\sqrt{2}$, which is larger than $2\epsilon_T=2\Gamma/3$.  It implies that the $\ell_1$-balls of transition probability estimates for $(s,a)$ between $M_t$ and $M'$ do not overlap, and we will identify $M_t$ as a new MDP.
Similarly, if the second term is larger than $\Gamma^2/2$, then using $\epsilon_R$ we can still identify $M_t$ as a new MDP.}

\item{If $M_t$ is not new, we claim that the algorithm will correctly identify it as some previously solved MDP, say $M''\in\hat{\Mset}$.  In particular, confidence intervals of its estimated model in every state--action pair must overlap with $M''$, since both models' confidence intervals contain the true model parameters.  On the other hand, for any $M'\in\hat{\Mset}\setminus\{M''\}$, its model estimate's confidence intervals do not have overlap with that of $M_t$'s in at least one state--action pair, as shown in case~1.  Therefore, the algorithm can find the unique and correct $M''\in\hat{\Mset}$ that is the same as $M_t$.}
\end{compactenum}

Finally, the lemma is proved with a union bound over all tasks, states and actions, and with the probability that $E_t$ fails to hold for some $t$.
\end{proof}

\subsection{Proof of \thmref{thm:llrl-fe}}
\label{sec:llrl-fe-proof}

\oldthm{thm:llrl-fe}
\textThmLlrlfe

\begin{proof}
We consider each possible case when solving the $t$th task, $M_t$.  As shown in \lemref{lem:correct-discovery}, with probability $1-\delta$, the following event $\Event_t$ hold for all $t\in[T]$: after \pacexp\ is run on $M_t$, \algref{alg:eee-fe} will discover the identity of $M_t$ correctly.  That is, if $M_t$ is a new MDP, it will be added to $\hat{\Mset}$; otherwise, $\hat{\Mset}$ remains unchanged.  In the following, we assume $\Event_t$ holds for every $t$, and consider the following cases:

\paragraph{(a) Exploitation in discovered tasks:}
we choose to exploit (line~\ref{alg:eee-fe:exploit} in Alg~\ref{alg:eee-fe}) 
and $M_t$ has been already discovered.  In this case, \fmllrl\ is used to do model elimination (within $\hat{\Mset}$) and to transfer samples from previous tasks that correspond to the same MDP as the current task $M_t$.  Therefore, with a similar analysis, we can get a per-task sample complexity of at most $O(CDm)= \tilde{O}(\frac{CD}{\Gamma^2})=\rho_0$.

\paragraph{(b) Exploitation in undiscovered tasks:}
we choose to exploit 
and $M_t$ has \emph{not} been discovered.  Running \fmllrl\ in this case can end up with an arbitrarily poor policy which follows a non-$\epsilon$-optimal policy in every step.  Therefore, the sample complexity can be as large as $H=\rho_3$.

\paragraph{(c) Exploration:} we choose to explore using \pacexp\ (lines~\ref{alg:eee-fe:explore:start}--\ref{alg:eee-fe:explore:end} in Alg~\ref{alg:eee-fe}).  In this case, with high probability, it takes at most $H_0(m)$ steps to make every state known, so that the model parameters can be estimated to within accuracy $O(\Gamma)$.
After that, we can reliably decide whether $M_t$ is a new MDP or not.  With sample transfer, the additional steps where $\epsilon$-sub-optimal policies are taken in the MDP corresponding to $M_t$ (accumulated across all tasks in the $T$-sequence) is at most $\zeta_s$, the single-task sample complexity.  The total sample complexity for tasks corresponding to this MDP is therefore at most $H_0(m) T(M_t) + \zeta_s=\rho_2 T(M_t)+\zeta_s$, where $T(M_t)$ is the number of times this MDP occurs in the $T$-sequence.

Finally, when \algref{alg:eee-fe} is run on a sequence of $T$ tasks, the total sample complexity---the number of steps in all tasks for which the agent does not follow an $\epsilon$-optimal policy---is given by one of the three cases above.  The sample complexity of exploration can therefore be upper bounded by adding \eqnref{eqn:optimal-loss} to \eqnref{eqn:fe-hp-regret} in \thmref{thm:fe-main}, completing the proof with an application of union bound that takes care of error probabilities (those involved in \lemref{lem:correct-discovery}, in upper-bounding sample complexity in individual tasks in the proof above, and in \thmref{thm:fe-main}).
\end{proof}

\section{Experiment Details}
\label{sec:exp_det}

\subsection{Gridworld}

For the grid world domain, 
all four MDPs had the same $25$-cell square grid layout and 
$4$ actions (up, down, left, right). State s1 is in the upper 
left hand corner, state s5 is the upper right hand corner, 
s20 is the lower left hand corner, and s25 is the lower right 
hand corner. All other states are labeled sequentially between 
these.
Actions succeed in their intended direction with 
probability $0.85$ and with probability $0.05$ go in each the other three directions (unless halted by a wall when the agent stays in the same state). For all actions corner states $s5$, $s20$, and $s25$ 
stay in the same state with probability $0.95$ or transition back to the 
start state (for all actions). The start state is at the center 
of the square grid ($s13$). The dynamics of all MDPs are identical. 
All rewards are sampled from Bernoulli distributions.  All rewards 
have parameter $0.0$ unless otherwise noted: \\
In MDP 1, corner state 
$s20$ has a reward parameter of $0.75$. In MDP 2, corner state 
$s5$ has a reward parameter of $0.75$.  In MDP 3, corner state 
$s25$ has a reward parameter of $0.75$.  In MDP 4, corner state 
$s25$ has a reward parameter of $0.75$, and corner state $s1$ has a reward parameter of $0.99$. 

\ef\ is given an upper bound on the number of MDPs ($4$) and 
the minimum probability of any of the MDPs across the $100$ tasks. 
When we compared to the Bayesian hierarchical multi-task learning 
algorithm \hmtl\ for the stochastic setting, we also provided it 
with an upper bound on the number of MDPs, though \hmtl\ is 
also capable of learning this directly. We used \hmtl\ with a 
two-level hierarchy (e.g. a class consists of a single MDP).  
We ran a variant of \fe\ labeled ``ForcedExp'' in the 
figures which 
uses a polynomially decaying exploration rate, $t^{\alpha}$ with $\alpha=0.5$, 
for all experiments. Performance does vary with the choice of 
$\alpha$ but 
$\alpha=0.5$ gave good results in our 
preliminary investigations.  Interestingly, this is consistent with
 the theoretical result that $\alpha=0.5$ minimizes dependence on $T$ for polynomially 
decaying exploration rates (\cf, \thmref{thm:fe-main}). 

We also explored the \fe\ algorithm 
using a constant exploration rate $\frac{2}{\sqrt{T}}$ for some earlier 
experiments: as expected performance was similar but slightly worse 
generally than using a decaying exploration rate, and so we focus 
all comparisons on the decaying exploration rate variant.

\subsection{Simulated Human-Robot Collaboration}

Our abstracted human-robot collaboration simulation comes 
from the recent work of \namecite{nikolaidis2015efficient}. 
The authors showed significant benefits in a human-robot collaboration problem, by assuming that user preference models over human-robot collaboration could be 
clustered into a small set of types. In their work, they took a previously 
collected set of data, and clustered it using the Expectation-Maximization (EM) algorithm
into a set of $4$ user types. Then, for each new user, they 
treated the problem as a mixed observability Markov 
decision process, where the (static) hidden state is 
the type of the user. In contrast to their work, we 
handle online lifelong learning across tasks, 
and our central contribution is a formal analysis of 
the sample complexity and performance, as opposed 
to \namecite{nikolaidis2015efficient} that present exciting empirical results on real human-robot interactions, without a theoretical analysis.

To demonstrate that our approach could also achieve 
good performance for this setting, we performed simulation 
experiments by constructing a lifelong learning 
domain in which each task is sampled from the four MDP 
models learned by \namecite{nikolaidis2015efficient}.\footnote{We thank 
those authors for sharing their models.}

The domain involves a human and robot collaborating to 
paint a box. The box is defined by its location along 
the horizontal ($5$ positions) and vertical ($11$ positions) axes, as well as its 
tilt angle ($11$ values), for a total of $605$ states. 
The possible actions of the robot are to change each of the three dimensions 
of the box's location (to stay the same or move forward or 
backward along that axis), resulting in $3^3=27$ actions. 
The transition dynamics are deterministic and identical 
for all $4$ MDP models. The MDP models differ in their (deterministic) reward 
models.  \namecite{nikolaidis2015efficient}  
learned the MDP models using the EM algorithm and 
inverse reinforcement learning from a set of $15$ humans 
performing $4$ different variants of the human-robot box 
painting task (varying by which position the human performed 
the task in) where the robot annotates actions for the robot.\footnote{Inverse reinforcement learning is then used to infer a reward function of the human user that would make the actions prescribed by the human for the robot optimal.}
We introduced a small amount of Gaussian 
noise (with $0.01$ standard deviation and zero mean) 
to the rewards. Note that even if the models are known 
to be deterministic, an agent learning with no prior 
information must still visit all $S\times A = 16335$ 
state--action pairs at least once to learn their dynamics, 
and of course it is not always possible to directly reach 
any other state in a single action. 

In our simulation, for each task one of the $4$ MDPs was randomly 
selected, and the agent executed in it for $H$ steps without a priori 
knowledge of its identity.  We set the horizon length by $H=3SA=49005$, 
so that it was feasible to visit all state--action pairs at least once.

We tested our \ef\ algorithm on this domain with a total number 
of tasks per ``run'' as $80$. We report results averaged over 
$30$ runs.

\end{document}